\documentclass{article}

\PassOptionsToPackage{numbers, compress}{natbib}
\usepackage[main,final]{neurips_2026}

\usepackage[utf8]{inputenc}
\usepackage[T1]{fontenc}
\usepackage{hyperref}
\usepackage{url}
\usepackage{booktabs}
\usepackage{amsfonts}
\usepackage{amsmath}
\usepackage{amssymb}
\usepackage{amsthm}
\usepackage{nicefrac}
\usepackage{microtype}
\usepackage[table]{xcolor}
\usepackage{graphicx}
\usepackage{subcaption}
\usepackage{natbib}
\usepackage{todonotes}
\usepackage{wrapfig}
\usepackage{tcolorbox}
\usepackage{enumitem}
\usepackage{multirow,multicol}
\usepackage{xspace}

\newtheorem{theorem}{Theorem}
\newtheorem{proposition}[theorem]{Proposition}

\usepackage{listings}
\usepackage[dvipsnames]{xcolor}

\newcommand{\name}{{\textsc{PORTAL}}\xspace}
\makeatletter
\newcommand{\shortmethod}[1][]{%
  \mathcal{R}_{\mathrm{S}}%
  \ifx\relax#1\relax
  \else
    ^{(#1)}%
  \fi
}
\makeatother

\definecolor{Gray}{gray}{0.90}
\newcolumntype{g}{>{\columncolor{Gray}}c}
\definecolor{ffe1da}{RGB}{255,225,218}
\definecolor{F7E0D5}{RGB}{247,224,213}
\definecolor{darkF7E0D5}{RGB}{209,154,128}
\colorlet{Light}{white!0!F7E0D5}

\lstdefinestyle{neuriPy}{
  language        = Python,
  basicstyle      = \ttfamily\footnotesize,
  keywordstyle    = \color{NavyBlue}\bfseries,
  commentstyle    = \color{OliveGreen}\itshape,
  stringstyle     = \color{BrickRed},
  showstringspaces= false,
  frame           = single,
  frameround      = tttt,
  rulecolor       = \color{black!35},
  backgroundcolor = \color{black!03},
  aboveskip       = 0.7\baselineskip,
  belowskip       = 0.7\baselineskip,
  tabsize         = 2,
  columns         = fullflexible,
  breaklines      = true,
  breakatwhitespace = true,
  captionpos      = b
}

\title{
    Rethinking Projector Training in Multimodal LLMs
}

\author{%
    Fabian Gr\"oger$^{1,2,3,}$\thanks{Joint first authorship}\; , Na\"el Ouerghemi$^{1,}$\footnotemark[1]\; , Shuo Wen$^{1,}$\footnotemark[1]\; , Maria Brbi\'c$^{1}$
    \\[0.2em]
    $^1$EPFL \; $^2$University of Basel \; $^3$HSLU
}

\begin{document}

\frenchspacing
\maketitle

\begin{abstract}
Multimodal LLMs are typically trained by first aligning a vision encoder with a language model via projector training on large image-caption datasets, followed by joint training of the full multimodal model.
Despite its widespread use, the role of the projector training stage remains poorly understood. 
We find that \textit{(i)} using only $10$ to $20\%$ of the data for training the projector already recovers most downstream performance gains, and
\textit{(ii)} projectors trained on different datasets can be linearly interpolated while largely preserving performance.
These results suggest that projector training mainly performs a coarse alignment, which many different projectors achieve equally well.
Motivated by these findings, we propose \name, a training-free projector initialization that requires \textit{neither paired image-caption data nor gradient-based optimization}, but only per-modality summary statistics.
\name computes an optimal-transport-based map between Gaussian approximations of the vision and language feature distributions on a shared principal subspace, so that the projector training stage can be skipped entirely.
Without any paired data, \name matches the standard pipeline trained on 558k image-caption pairs across six LLM backbones, two vision encoders, and 16 benchmarks. It outperforms that pipeline in $3$ of $7$ (vision encoder, LLM) settings and reaches an average $98\%$ of its score across the other four.
\end{abstract}

\section{Introduction}
Modern multimodal LLMs commonly align a vision encoder with a language model by first training a projector on a large amount of image-caption paired samples, followed by joint training of the full multimodal model.
This pipeline has become standard, from LLaVA~\cite{liu2023visual,liu2024llava}, Molmo~\cite{deitke2024molmo}, and InternVL~\cite{chen2024internvl} to the QwenVL series~\cite{bai2024qwen2vl}. 
While the first stage of projector alignment has been motivated by the goal of producing a visual tokenizer compatible with the frozen LLM~\cite{liu2023visual}, what this stage \textit{actually} does remains poorly understood. Prior works offer conflicting evidence: Prismatic~\cite{karamcheti2024prismatic} reports that skipping projector alignment can improve performance, whereas \mbox{Cambrian-1}~\cite{tong2024cambrian} shows that scaling the training data of projector alignment to 2.5M image-caption pairs helps. MM1~\cite{mckinzie2024mm1} suggests that factors like resolution and data mixing matter more than the projector design.
These findings are difficult to reconcile without a deeper understanding of what projector alignment actually does. 

In this work, we show that projector alignment primarily performs \textit{coarse alignment}, mapping visual features into a language-model-compatible input space which admits a large set of possible solutions.
Two empirical observations support this view:
\textit{(i)} using only $10$ to $20\%$ of the standard $558$k-sample alignment corpus achieves similar performance to training on the full dataset, and \textit{(ii)} projectors trained on entirely different datasets can be linearly interpolated without losing performance. 
If the projector alignment learned a precise and semantically unique mapping, neither reducing the training data nor linear interpolation between independently trained projectors would preserve performance. 

\begin{figure}
    \centering
    \includegraphics[width=1.0\linewidth]{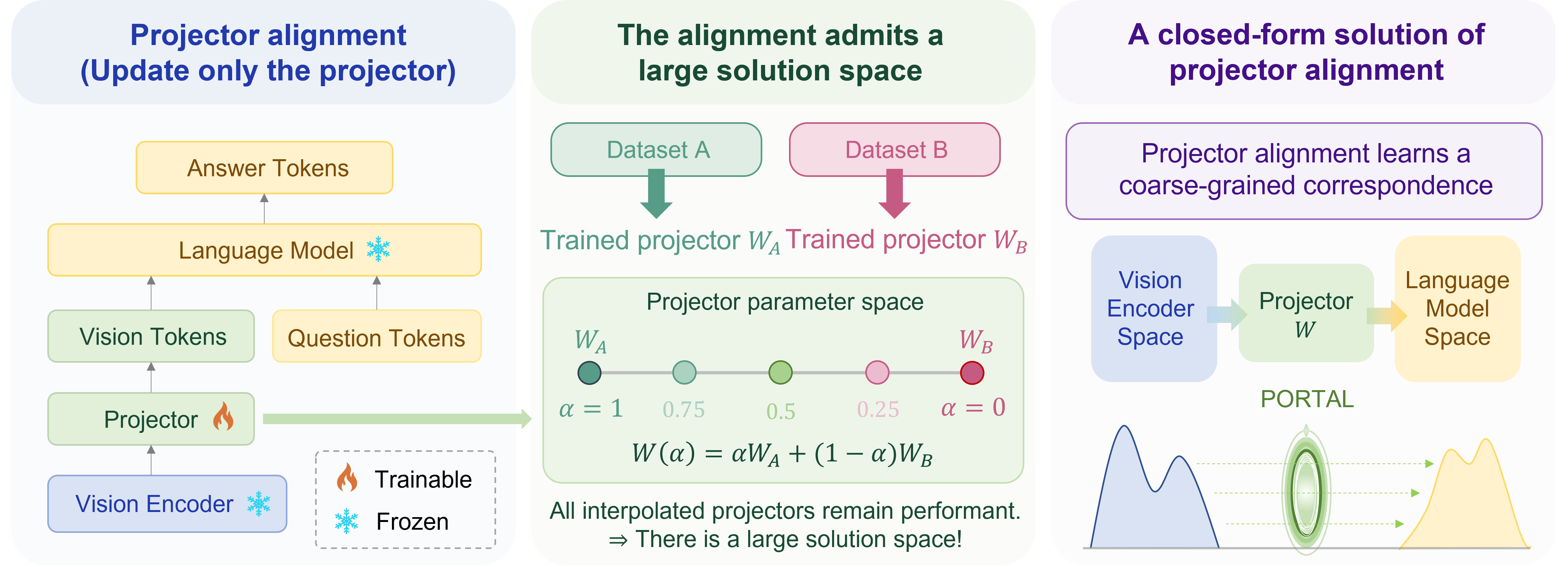}
    \caption{
    \textbf{Projector alignment as a coarse alignment with a broad solution space.}
    (\textit{Left}) Standard pipeline: during projector alignment, the vision encoder and language model are frozen, and only the projector is trained to map visual tokens into a representation space compatible with the LLM using image-caption pairs.
    (\textit{Middle}) We show that projectors trained on different image-caption datasets can be linearly interpolated while largely preserving downstream performance, indicating that there is a large solution space for the weights of the projector.
    (\textit{Right}) This observation motivates a closed-form solution of projector alignment. We propose a training-free projector \name constructed by optimal transport between the two modalities' marginal statistics.
    }
    \label{fig:main}
\end{figure}

Motivated by these observations, we propose PORTAL ({\textbf{P}rojector via \textbf{O}ptimal-transpo\textbf{RT} \textbf{AL}ignment}), a training-free approach that constructs the projector from the marginal statistics of the two modalities.
PORTAL is the Bures-Wasserstein optimal transport map between Gaussian approximations of the vision feature distribution and the language-model input distribution, computed in a shared top-$k$ principal subspace.
It does not require \textit{any paired captions or gradient-based optimization}, relying only on the per-modality means, principal directions, and the spread along each direction. PORTAL removes the projector alignment training stage entirely.

 Across seven (vision encoder, LLM) pairs spanning four LLM families (Vicuna, LLaMA, Qwen, OLMo) from 0.5B to 8B parameters, and two vision encoders (CLIP, SigLIP), PORTAL matches the standard pipeline trained on $558$k image-caption pairs and exceeds it in three out of seven settings, peaking at $106\%$ of the trained baseline. When PORTAL is used to initialize rather than replace projector alignment, the performance of the baseline is reached with $135\times$ fewer alignment FLOPs. Performance gaps are concentrated on OCR-heavy benchmarks, where marginal statistics appear insufficient to capture fine-grained character-level spatial structure that paired supervision provides.

\section{Related Work}\label{sec:RelatedWork}
\textbf{Projector alignment in multimodal LLMs.}
Two-stage multimodal LLM training, where a vision encoder is first aligned to a frozen language model via training a projector, followed by visual instruction tuning, has become a standard pipeline for training multimodal LLMs. 
This pipeline was popularized by LLaVA~\cite{liu2023visual,liu2024llava} and replicated in InstructBLIP \cite{dai2023instructblip}, BLIP-2 \cite{li2023blip2}, Flamingo \cite{alayrac2022flamingo}, Molmo \cite{deitke2024molmo}, InternVL \cite{chen2024internvl}, and the Qwen-VL series \cite{bai2024qwen2vl}. Despite its success, the role of Stage-I projector alignment has been poorly understood and prior studies offer conflicting views. Prismatic finds it can be skipped without a loss~\cite{karamcheti2024prismatic}. Cambrian-1 reports that scaling Stage-I data to 2.5M pairs helps~\cite{tong2024cambrian}. MM1 emphasizes resolution and data mixing over the projector itself~\cite{mckinzie2024mm1}. 
We resolve this disagreement by identifying what projector alignment actually does: Stage-I acts as \textit{geometric conditioning of visual features into the LLM input distribution}, not as fine-grained semantic learning. 
This explains the three findings above as a single phenomenon: skipping Stage-I works when the 
projector already lands in a compatible region (Prismatic~\cite{karamcheti2024prismatic}), more data helps only until that region is reached (Cambrian-1~\cite{tong2024cambrian}), and the resolution and data mixing dominate because they affect the conditioning more than the projector capacity does (MM1~\cite{mckinzie2024mm1}). 
These findings lead us to propose a closed-form projector that requires neither paired data nor gradient updates.

\textbf{Connector designs and projector geometry.}
Beyond the MLP used in LLaVA~\cite{liu2023visual}, connector architectures span the Q-Former \cite{li2023blip2}, Perceiver Resampler \cite{alayrac2022flamingo}, locality-enhanced abstractors \cite{cha2024honeybee}, ALIGN~\cite{masry2025alignvlm}, vision-expert mixtures \cite{zong2024mova}, prefix-attention adapters \cite{zhang2023llamaadapter}, and pixel-shuffle or patch-merging schemes for token reduction \cite{allal2025smolvlm,bai2024qwen2vl}. A complementary line of work analyzes what these connectors actually do. \citet{verma2024crossmodal} track how visual information degrades through LLM layers, \citet{li2025lostembeddings} probe information loss through learned projectors, \citet{li2025unseenbias} report systematic norm discrepancies between visual and text tokens, and \citet{shukor2024implicit} show that cross-modal alignment emerges primarily after self-attention layers. 
We use the 2-layer MLP projector of LLaVA by default, and also test PORTAL with a linear projector and with the C-Abstractor~\cite{cha2024honeybee}, which has $19\times$ more parameters (Section~\ref{sec:exp_robust}).

\textbf{Optimal transport for feature alignment.}
The Bures-Wasserstein optimal transport map between two Gaussians has a closed form on positive-definite covariances \cite{bhatia2019bures,peyre2019computational}, which the construction of the projector in our work exploits. \citet{courty2017optimal} introduced OT for domain adaptation, and \citet{alvarez2018gromov} used Gromov-Wasserstein to align word-embedding spaces with no paired anchors. 
Closest to \name is CORAL \cite{sun2016return,sun2016deep}, which aligns second-order statistics with a single linear map for domain adaptation. PORTAL extends this idea to the multimodal-alignment setting and ties it explicitly to the Bures-Wasserstein optimum on a shared top-$k$ principal subspace, which we then use as a closed-form projector rather than a regularizer.

\textbf{Training-free and efficient connector construction.}
Several recent works try to eliminate or reduce connector training. VPGTrans \cite{zhang2023vpgtrans} transfers a learned visual prompt generator across LLMs at reduced cost. LangBridge \cite{liao2025langbridge} interprets visual tokens as combinations of language embeddings, enabling cross-LLM transfer at the price of bottlenecking visual information through a vocabulary subspace. SEMI \cite{ince2025semi} meta-learns a hypernetwork connector that adapts to new modalities from a few samples. In the related CLIP-classification setting, training-free or lightweight adapters such as Tip-Adapter \cite{zhang2022tip} and TaskRes \cite{yu2023task} cache or residualize over a frozen backbone. PORTAL differs from all of the above by requiring neither paired data, gradient steps, learned caches, nor a vocabulary bottleneck: it is a closed-form construction from per-modality summary statistics alone. \citet{gupta2026better} also learn without paired multimodal data, but to improve a unimodal representation by sharing parameters across unpaired modalities, whereas we ask whether the projector between a frozen vision encoder and a frozen LLM needs paired training at all.

\section{What Does the Projector Alignment \textit{Actually} Do?}\label{sec:motivation}
\subsection{Projector Alignment Saturates with a Small Number of Samples}
\label{sec:task_level}

To investigate the role of projector alignment, we first examine how downstream performance scales with the amount of Stage-I training data. If strong performance requires large-scale data, it would suggest the projector is learning fine-grained visual-language correspondences. Conversely, if a small fraction of the data is sufficient, this would indicate that the projector primarily learns a coarse-grained correspondence, placing visual features into a broadly compatible input space for the language model.

We conduct this analysis on the LLaVA 1.5 setup (CLIP ViT-L/14-336, Vicuna 1.5 7B, 2-layer MLP projector), with LCS-558k for Stage-I and LLaVA-665k for Stage-II. For the data-fraction experiment we replace Vicuna with MobileLLaMA 1.4B, which keeps 11 data fractions on two corpora tractable. We evaluate on 16 benchmarks grouped into four categories adapted from the Cambrian-1~\cite{tong2024cambrian} taxonomy: Knowledge (Know.), General (Gen.), OCR, and Vision-Centric (Vis.). Appendix~\ref{app:eval_setup} lists the benchmarks in each. Full experimental details are in Appendix~\ref{app:motivation_setup}.

\begin{figure}
    \centering
    \begin{subfigure}{0.48\linewidth}
        \centering
        \includegraphics[width=\linewidth]{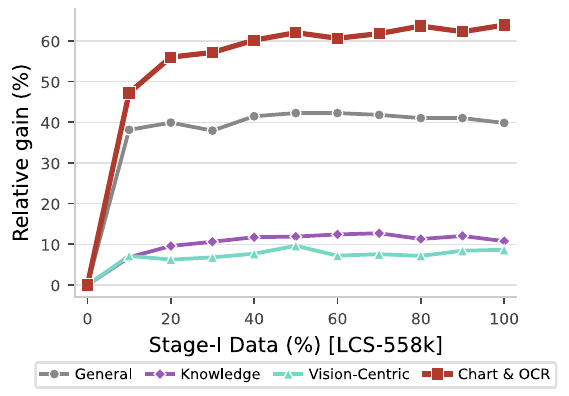}
        \caption{LCS-558k}
        \label{fig:task_category_lcs}
    \end{subfigure}
    \hfill
    \begin{subfigure}{0.48\linewidth}
        \centering
        \includegraphics[width=\linewidth]{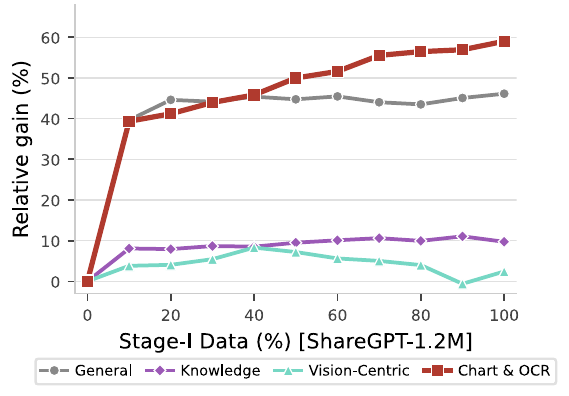}
        \caption{ShareGPT-1.2M}
        \label{fig:task_category_sharegpt}
    \end{subfigure}
    \caption{\textbf{Stage-I gain saturates early.} Relative gain in downstream performance over the no-pretraining baseline by Cambrian-1 category, as a function of Stage-I data fraction on LCS-558k (left) and ShareGPT-1.2M (right). OCR and Chart benefit the most, while Knowledge and Vision-Centric scores saturate early.}
    \label{fig:task_category}
\end{figure}

We observe that only $10$ to $20\%$ of the LCS-558k image-caption pairs already capture most of the downstream benefit, and all categories plateau by $30$ to $40\%$ (Figure \ref{fig:task_category}). The marginal downstream benefit of the remaining pairs is small under this protocol. ShareGPT-1.2M, more than twice as large, does not do better: its average peaks at $38.1$ with $40\%$ of its data (Appendix~\ref{app:full_benchmarks}).
These results suggest that projector alignment is not primarily learning fine-grained visual-language correspondences. Rather, it appears to perform a coarse-grained alignment step, mapping visual features into a region of the LLM input space from which subsequent joint training can effectively adapt the full multimodal model.

\subsection{Projector Alignment Admits a Broad Solution Space}
\label{sec:solution}

We have shown that projector alignment can be achieved with only a small fraction of the standard pretraining data. This observation is consistent with two possible explanations: either the set of effective projector solutions is narrow but easy to reach, or the solution space itself is broad. To distinguish between these possibilities, we examine whether independently trained projectors can be linearly interpolated while preserving performance.

\begin{wrapfigure}[20]{r}{0.42\linewidth}
\vspace{-1.2em}
\centering
\includegraphics[width=\linewidth]{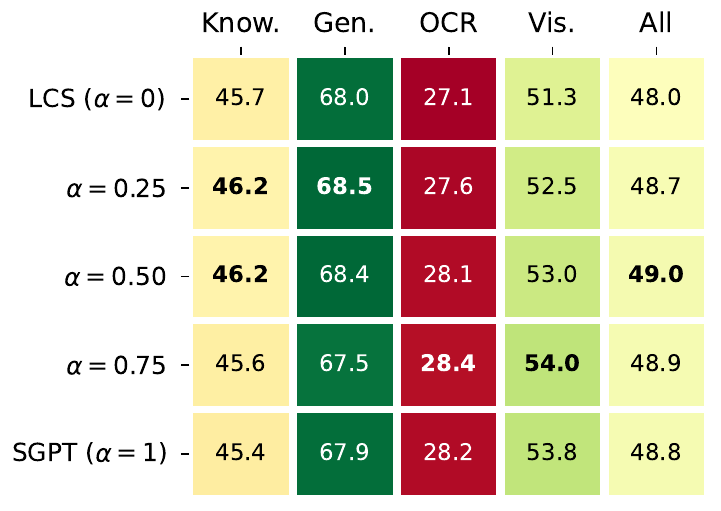}
\caption{\textbf{Cross-data projectors are linearly interpolable.} Cross-data interpolation $W_\alpha = (1{-}\alpha)W_{\text{LCS}} + \alpha W_{\text{ShareGPT}}$, evaluated per Cambrian-1 category. The optimal $\alpha$ varies by category, but \emph{every} interior blend outperforms both endpoints on the aggregate score, with the $\alpha{=}0.5$ mixture peaking above either trained projector.}
\label{fig:interp_heatmap}
\vspace{-1em}
\end{wrapfigure}

Specifically, we train two projectors on different datasets (LCS-558k versus ShareGPT-1.2M), obtaining the weights $W_{\text{LCS}}$ and $W_{\text{ShareGPT}}$. Then we evaluate the linear interpolation $W_\alpha = (1{-}\alpha)W_{\text{LCS}} + \alpha W_{\text{ShareGPT}}$ with $\alpha \in \{0, 0.25, 0.5, 0.75, 1\}$.
If effective projectors occupied a narrow and isolated region, interpolation would be expected to degrade performance. Instead, we find that projectors along the entire interpolation path largely preserve downstream performance, supporting the view that projector alignment admits a broad solution space (Figure~\ref{fig:interp_heatmap}).

The per-category breakdown results further support this interpretation.
The cross-data interpolation path is not only traversable but often outperforms either endpoint. Averaged across the four Cambrian-1 categories, every interior $\alpha$ improves on both LCS-558k and ShareGPT-1.2M projectors, with the optimum at $\alpha{=}0.5$.
The category-wise optima differ: OCR peaks closer to the ShareGPT endpoint, Knowledge closer to the LCS endpoint.
The aggregate score improves with mixing, so a simple combination of two independently trained solutions, with no additional Stage-I optimization, outperforms both.
Such behavior indicates that projector alignment admits a broad solution space.

The two projectors above start from the same initialization and differ only in their training data. We next asked whether the initialization matters in the same way, and trained a second pair on the same LCS-558k data from two different random initializations. Unlike the cross-data pair, this cross-seed pair is not linearly interpolable: captioning CIDEr drops from about $80$ at both endpoints to $0.7$ at $\alpha{=}0.5$ (Appendix~\ref{app:cross_seed}). Initialization thus appears to select the loss basin, and training data the solution within it.

These results clarify the role of projector alignment. The early saturation with limited data and the traversable interpolation path between projectors trained on different data suggest that projector alignment does not require a highly specific solution. Instead, projector alignment appears to place visual features into a broadly compatible region of the LLM input space, where many different projector weights can serve the same functional role.

\section{PORTAL: Training-Free Projector via Optimal Transport}
\label{sec:method_construction}

The previous findings suggest
that projector alignment primarily serves as geometric conditioning, placing visual features into a region of the LLM input space from which subsequent joint training can effectively adapt the multimodal model. This mapping admits a large set of solutions, motivating a simpler alternative to the projector alignment training stage. 

We instantiate this idea with \name (\textbf{P}rojector via \textbf{O}ptimal-transpo\textbf{RT} \textbf{AL}ignment), a training-free method that initializes the projector from the marginal feature statistics of the vision encoder and the LLM. \name estimates the mean, principal directions, and variances of each modality separately, then constructs a closed-form optimal-transport map that aligns the source visual feature distribution with a compatible target language-feature distribution. 
We also show why matching only means and covariances can suffice. The LLM normalizes each token before attention, so attention depends only on token directions. When token norms vary little, the spread of these directions is fixed by the covariance, which \name matches.

\subsection{Closed-Form Projector Initialization}
Let $\mathbf{x} \in \mathbb{R}^{d_s}$ denote a feature from the source modality (\textit{e.g.}, a per-patch vision token in our setting) and $\mathbf{y} \in \mathbb{R}^{d_t}$ a target embedding (\textit{e.g.}, an LLM input token). 
We seek a map $f: \mathbb{R}^{d_s} \to \mathbb{R}^{d_t}$, the \emph{projector} in standard multimodal-LLM terminology, whose output distribution is compatible with the distribution the target is trained to process.
We adopt the standard 2-layer MLP used by LLaVA~\citep{liu2024llava},
\begin{equation}
    f(\mathbf{x}) = \mathbf{W}_2\, \phi(\mathbf{W}_1 \mathbf{x} + \mathbf{b}_1) + \mathbf{b}_2,
    \quad \mathbf{W}_1 \in \mathbb{R}^{d_t \times d_s},\; \mathbf{W}_2 \in \mathbb{R}^{d_t \times d_t},
\end{equation}
with $\phi$ as GELU. We will fix $\mathbf{W}_2 = \mathbf{I}_{d_t}$ (\textit{i.e.}, the identity matrix of dimension $d_t$), $\mathbf{b}_2 = \mathbf{0}$ (\textit{i.e.}, the zero vector of dimension $d_t$), and construct $(\mathbf{W}_1, \mathbf{b}_1)$ in closed form.

\textbf{PORTAL.} Approximate each modality's marginal as a Gaussian, $\mathcal{N}(\boldsymbol\mu_s, \boldsymbol\Sigma_s)$ on the source side and $\mathcal{N}(\boldsymbol\mu_t, \boldsymbol\Sigma_t)$ on the target side. We derive the affine parameters $(\mathbf{W}_1, \mathbf{b}_1)$ from the Gaussian optimal transport map~\cite{olkin1982distance,bhatia2019bures} between these approximations, computed on the top-$k$ PCA subspaces of source and target with coordinates identified by eigenvalue order:
\begin{equation}
    \mathbf{W}_1 = \mathbf{V}_t \operatorname{diag}\left(\frac{\boldsymbol\sigma_t}{\boldsymbol\sigma_s}\right)\mathbf{V}_s^{\top},
    \qquad
    \mathbf{b}_1 = \boldsymbol\mu_t - \mathbf{W}_1 \boldsymbol\mu_s,
    \label{eq:wot}
\end{equation}
where $\mathbf{V}_s \in \mathbb{R}^{d_s \times k}$ and $\mathbf{V}_t \in \mathbb{R}^{d_t \times k}$ are the top-$k$ PCA bases of the source and target features (orthonormal columns) and $\boldsymbol\sigma_s, \boldsymbol\sigma_t \in \mathbb{R}^k$ are the square roots of the top-$k$ covariance eigenvalues, \textit{i.e.}, the spread along each principal axis. The map $\mathbf{W}_1$ whitens the source distribution (decorrelates it and rescales each axis to unit variance), expresses it in the target's principal axes, and rescales each axis to match the target's spread. The bias $\mathbf{b}_1$ recenters the result. The closed form matches Gaussian moments at the preactivation $\mathbf{W}_1\mathbf{x} + \mathbf{b}_1$. The post-GELU output is verified empirically in Appendix~\ref{app:gelu_diagnostic}. Appendix~\ref{app:portal_pseudocode} gives a self-contained PyTorch implementation. 

This construction depends only on summary statistics of each modality (\textit{i.e.}, means, principal directions, and the spread along each direction). It requires no Stage-I paired captions and no Stage-I optimization. A single PCA on each side is sufficient. It is agnostic to which model produces $\mathbf{x}$ and which model consumes $f(\mathbf{x})$. We adopt $k{=}512$ as the default throughout the paper, which is small enough to suppress noisy directions and large enough to preserve the dominant geometry. Varying $k$ over $\{64, 128, 256, 512, 1024\}$ supports this choice empirically (Appendix~\ref{app:rank_ablation}).

\textbf{On the Gaussian assumption.}
Eq.~\eqref{eq:wot} treats both modalities as Gaussian to obtain a closed-form transport map in terms of means, principal directions, and the spread along each direction. Real source and target distributions deviate from Gaussianity in tail weight and in the shape of their marginals (Appendix~\ref{app:gaussianity_audit}). We explain why the construction remains effective under these deviations in Section~\ref{sec:method_uniformity}.

\textbf{Targeting hidden states instead of input embeddings.}
Eq.~\eqref{eq:wot} matches visual features to the LLM's input embeddings, the vectors the LLM looks up for each text token before its first transformer block. The same construction can instead target the hidden states that text tokens have after block $\ell$: we replace $(\boldsymbol\mu_t, \mathbf{V}_t, \boldsymbol\sigma_t)$ by the mean, PCA basis, and spread along the principal axes of these hidden states, collected by passing a list of English words and short phrases through the LLM. The visual tokens still enter the LLM at its input. Only the target statistics change. Trained projectors that inject visual tokens into deeper layers~\cite{meng2024deepstack,qwenteam2025qwen3vl} motivate this variant, which we call PORTAL-ctx$\ell$. We use it to ablate the target layer (Section~\ref{sec:exp_ctx}), where the input embeddings ($\ell{=}0$) remain the best target. Details are in Appendix~\ref{app:ctx_methodology}.

\subsection{Why Moment Matching Survives Non-Gaussianity}
\label{sec:method_uniformity}

Real vision and text features are not Gaussian (Appendix~\ref{app:gaussianity_audit}), so Eq.~\eqref{eq:wot} cannot match their full distributions. We argue that it does not need to. Every LLM we use normalizes before computing attention logits (RMSNorm on each input token, or on queries and keys in OLMo~2), so rescaling a token leaves its logits unchanged. Attention therefore depends only on the directions of the tokens, and the relevant property of the projector output is how these directions are spread.

We measure this spread by the \emph{cosine dispersion} $\omega$, the standard deviation of the cosine similarity between two tokens:
\begin{equation}
  \omega^2 = \frac{1}{\binom{n}{2}} \sum_{i<j} \left( \cos\angle(\mathbf{z}_i, \mathbf{z}_j) - \overline{\cos} \right)^2,
  \qquad
  \overline{\cos} = \frac{1}{\binom{n}{2}} \sum_{i<j} \cos\angle(\mathbf{z}_i, \mathbf{z}_j),
  \label{eq:sigma_c}
\end{equation}
where $\{\mathbf{z}_i\}_{i=1}^n$ are the output tokens, pooled over all patches of all images. A small $\omega$ means the tokens point in similar directions, a large $\omega$ that they spread over many directions. Two properties of a distribution with covariance $\boldsymbol\Sigma$ determine $\omega$. The \emph{effective dimension} $d_{\mathrm{eff}}(\boldsymbol\Sigma) := (\mathrm{tr}\,\boldsymbol\Sigma)^2/\mathrm{tr}(\boldsymbol\Sigma^2)$ counts the directions that carry most of the variance: it equals $d$ when variance is spread evenly over $d$ directions and $1$ when a single direction holds it all. The \emph{norm spread} $\kappa := \mathrm{std}(\|\mathbf z\|^2)/\mathbb{E}[\|\mathbf z\|^2]$ is small when all tokens have about the same length.

\begin{proposition}[Spectrum-to-$\omega$ bound]
\label{prop:sigma_c}
Let $\mathbf z$ and $\mathbf z'$ be independent random vectors in $\mathbb{R}^d$ with the same distribution, which has mean zero and covariance $\boldsymbol\Sigma$. Assume $\mathbb{P}(\mathbf z = \mathbf 0) = 0$. Then
\begin{equation}
\left| \mathrm{Var}\big(\cos\angle(\mathbf z, \mathbf z')\big) - \frac{1}{d_{\mathrm{eff}}(\boldsymbol\Sigma)} \right| \le 2\kappa + \kappa^2 + \kappa^4 .
\label{eq:prop_bound}
\end{equation}
\end{proposition}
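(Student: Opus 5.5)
We compare $c=\cos\angle(\mathbf z,\mathbf z')$ with the normalized inner product $u=\langle \mathbf z,\mathbf z'\rangle/m$, where $m:=\mathbb E\|\mathbf z\|^2=\mathrm{tr}\,\boldsymbol\Sigma$. By independence and mean zero, $\mathbb E\langle\mathbf z,\mathbf z'\rangle=0$ and $\mathbb E\langle\mathbf z,\mathbf z'\rangle^2=\mathbb E[\mathbf z^\top\mathbf z'\mathbf z'^\top\mathbf z]=\mathrm{tr}(\boldsymbol\Sigma^2)$. So $\mathrm{Var}(u)=\mathrm{tr}(\boldsymbol\Sigma^2)/(\mathrm{tr}\,\boldsymbol\Sigma)^2=1/d_{\mathrm{eff}}(\boldsymbol\Sigma)$ exactly, and no Gaussianity is needed. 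Also $\mathbb E c=0$: conditioning on $\mathbf z$ gives $\mathbb E[c\mid \mathbf z]=\langle \mathbf z/\|\mathbf z\|,\mathbb E[\mathbf z'/\|\mathbf z'\|]\rangle$, and symmetrizing with $\mathbf z\leftrightarrow\mathbf z'$ gives $\mathbb E c=\|\mathbb E[\mathbf z/\|\mathbf z\|]\|^2\ge 0$. This is not obviously zero, so I would instead bound $|\mathbb E c^2-\mathbb E u^2|$ and treat $(\mathbb E c)^2$ separately. A cleaner route is to write $\mathrm{Var}(c)\le\mathbb E c^2$ and $\mathrm{Var}(c)\ge \mathbb E c^2-(\mathbb E c)^2$, then bound $(\mathbb E c)^2$ by $\mathbb E[(c-u)\cdot\text{stuff}]$ using $\mathbb E u=0$, so that $|\mathbb E c|=|\mathbb E(c-u)|$.

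The core step is the identity $c = u\cdot r$, with $r:=m/(\|\mathbf z\|\|\mathbf z'\|)$. Then $c^2=u^2 r^2$, and $r^2=1/(a a')$ with $a=\|\mathbf z\|^2/m$ and $a'=\|\mathbf z'\|^2/m$, which are i.i.d. with mean $1$ and standard deviation $\kappa$. Hence $\mathbb E c^2-\mathbb E u^2=\mathbb E[u^2(r^2-1)]$. The obstacle is that $r^2$ is unbounded near $\mathbf z=\mathbf 0$, so I would avoid expanding $r$ and exploit $c^2\le 1$ instead. Write $u^2=c^2 aa'$, so that
\[
\mathbb E u^2-\mathbb E c^2=\mathbb E[c^2(aa'-1)] .
\]
Since $0\le c^2\le1$, this is bounded by $\mathbb E|aa'-1|\le \mathbb E|a-1|\,\mathbb E a' +\mathbb E|a'-1|\le 2\kappa$ via $aa'-1=(a-1)a'+(a'-1)$ and Cauchy–Schwarz. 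Alternatively, $\mathbb E(aa'-1)^2=(1+\kappa^2)^2-1=2\kappa^2+\kappa^4$, whose square root gives a bound that is at most $2\kappa+\kappa^2$ in the stated form. So $|\mathbb E c^2-1/d_{\mathrm{eff}}|\le 2\kappa+\kappa^2$.

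It remains to control $(\mathbb E c)^2$, which should account for the $\kappa^4$ term. Using $\mathbb E u=0$, I write $\mathbb E c=\mathbb E[c(1-\sqrt{aa'})]$, so $|\mathbb E c|\le \mathbb E|1-\sqrt{aa'}|\le \mathbb E|1-aa'|$, because $|1-\sqrt x|\le|1-x|$ for $x\ge0$. Sharper, $\mathbb E c=\|\mathbb E[\mathbf z/\|\mathbf z\|]\|^2$. Combining $\mathbb E[\mathbf z/\sqrt m]=0$ with $\mathbb E[\mathbf z(1/\|\mathbf z\|-1/\sqrt m)]$ bounded via $|1/\sqrt a-1|\cdot\sqrt a=|1-\sqrt a|\le|1-a|$ gives $\|\mathbb E[\mathbf z/\|\mathbf z\|]\|\le \mathbb E|1-a|\le\kappa$. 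Hence $\mathbb E c\le\kappa^2$ and $(\mathbb E c)^2\le\kappa^4$. Putting this together, $\mathrm{Var}(c)=\mathbb E c^2-(\mathbb E c)^2$ lies within $2\kappa+\kappa^2+\kappa^4$ of $1/d_{\mathrm{eff}}$, which is the bound in Proposition~\ref{prop:sigma_c}. The main care point is the direction-vector bound $\|\mathbb E[\mathbf z/\|\mathbf z\|]\|\le\kappa$, and the hypothesis $\mathbb P(\mathbf z=\mathbf 0)=0$ is used only to make $c$ well defined.
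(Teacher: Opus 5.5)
Your argument is correct and is essentially the paper's proof. It uses the exact identity $\mathbb{E}\langle\mathbf z,\mathbf z'\rangle^2=\mathrm{tr}(\boldsymbol\Sigma^2)$, controls $\mathbb{E}[c^2(aa'-1)]$ via $c^2\le 1$, and bounds the mean by $\|\mathbb{E}[\mathbf z/\|\mathbf z\|]\|\le\mathbb{E}|1-\sqrt a|\le\kappa$, so $(\mathbb{E}c)^2\le\kappa^4$; these match the paper's three steps. Your decomposition $aa'-1=(a-1)a'+(a'-1)$, with $a'\ge 0$ and $\mathbb{E}a'=1$, even gives $\mathbb{E}|aa'-1|\le 2\kappa$ instead of the paper's $2\kappa+\kappa^2$, a slightly sharper total of $2\kappa+\kappa^4$. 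Delete the stray ``$\mathbb{E}c=0$'' claim in your first paragraph, which you retract a line later.
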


The assumption $\mathbb{P}(\mathbf z = \mathbf 0) = 0$ says that a token is never exactly the zero vector, so that its cosine with another token is always defined. The left side of Eq.~\eqref{eq:prop_bound} is the variance of the cosine similarity between two tokens drawn at random from the distribution, which is the population value of $\omega^2$ that Eq.~\eqref{eq:sigma_c} estimates from $n$ tokens. The right side is zero when all tokens have the same length ($\kappa = 0$), and the proposition then gives $\omega^2 = 1/d_{\mathrm{eff}}(\boldsymbol\Sigma)$: the spread of token directions is fixed by the covariance alone. Intuitively, when all tokens have the same length, the cosine of two tokens is their dot product divided by a constant, and the second moment of the dot product of two independent tokens depends only on the covariance. The proof is in Appendix~\ref{app:concentration} and does not assume Gaussianity. \name matches the target's mean and covariance on the top-$k$ subspace (second-order matching, \textit{i.e.}, matching the means and covariances of the two distributions), so in this limit its outputs inherit the target's $\omega$.

Real token norms vary (Appendix~\ref{app:gelu_diagnostic}), so the bound does not predict $\omega$ precisely for our features. We therefore measure $\omega$ directly. On three (CLIP, LLM) pairs, \name reduces $\omega$ from $0.23$ at the source to $0.09$ to $0.10$, towards the target's $0.05$ to $0.06$, while a random projector leaves $\omega$ at the source value (Appendix~\ref{app:sigma_c_projector}). The attention weights also depend on the learned query and key projections, which $\omega$ does not describe.

\section{Experiments}\label{sec:Experiments}

\definecolor{bestcell}{HTML}{E8F5E9}
\definecolor{avgrow}{HTML}{F5F5F5}
\definecolor{baserow}{HTML}{EEEEEE}
\definecolor{altrow}{HTML}{FAFAFA}
\newcolumntype{G}{!{\color{gray!45}\vrule width 0.5pt}}

\begin{table*}[t]
\centering
\caption{\textbf{PORTAL generalizes across LLM backbones and vision encoders.}
All configurations use identical Stage-II instruction tuning (LLaVA-665k~\cite{liu2024llava}).
PORTAL requires zero Stage-I paired captions and zero Stage-I gradient steps.
Baseline $\dag$: uses 558k paired captions for Stage-I pretraining.
\textbf{Bold}: best among zero training data methods.
\textbf{Avg}: unweighted mean of the 16 per-benchmark scores. \%BL: this average as a percentage of the trained baseline (BL), which is therefore $100\%$.
Subscripts on the Avg and \%BL columns give the 95\% bootstrap confidence-interval half-width over per-sample outcomes, from 1000 resamples paired across methods.
LLM backbones: Vicuna 1.5~7B~\cite{chiang2023vicuna}, LLaMA~3.1~8B~\cite{dubey2024llama3}, Qwen~2.5~7B~\cite{yang2024qwen2}, LLaMA~3.2~1B~\cite{dubey2024llama3}, Qwen~2.5~0.5B~\cite{yang2024qwen2}, OLMo~2~1B~\cite{olmo2_2025}.
Vision encoders: CLIP~ViT-L/14-336~\cite{radford2021clip}, SigLIP~SO400M~\cite{zhai2023siglip}.
$^\text{L}$: 500-sample Lite subset from \texttt{lmms-eval}~\cite{zhang2024lmmseval}.}
\label{tab:main_results}
\vspace{2pt}
\resizebox{\textwidth}{!}{%
\setlength{\tabcolsep}{3.5pt}
\renewcommand{\arraystretch}{1.08}
\begin{tabular}{@{}l c l l G ccccG cccc G cccc G cccc G c  c@{}}
\toprule
& & & & \multicolumn{4}{c}{Knowledge} & \multicolumn{4}{c}{General} & \multicolumn{4}{c}{OCR} & \multicolumn{4}{c}{Vision} & & \\
\cmidrule(lr){5-8} \cmidrule(lr){9-12} \cmidrule(lr){13-16} \cmidrule(lr){17-20}
\textbf{LLM} & \textbf{Scale} & \textbf{Vision} & \textbf{Init} & \rotatebox{60}{AI2D} & \rotatebox{60}{SciQA} & \rotatebox{60}{MathV} & \rotatebox{60}{MMMU} & \rotatebox{60}{GQA} & \rotatebox{60}{MME} & \rotatebox{60}{SEED$^\text{L}$} & \rotatebox{60}{MMB} & \rotatebox{60}{TxtVQA} & \rotatebox{60}{OCR-Bv2} & \rotatebox{60}{DocVQA} & \rotatebox{60}{ChartQA} & \rotatebox{60}{CV-2D} & \rotatebox{60}{CV-3D} & \rotatebox{60}{BLINK} & \rotatebox{60}{RealWQ} & \cellcolor{avgrow}\textbf{Avg} & \textbf{\%BL} \\
\midrule
\multirow{6}{*}{Vicuna 1.5} & \multirow{6}{*}{7B}
 & \multirow{3}{*}{CLIP}
 & \cellcolor{baserow}Baseline$^\dag$ & \cellcolor{baserow}54.5 & \cellcolor{baserow}67.9 & \cellcolor{baserow}26.1 & \cellcolor{baserow}34.3 & \cellcolor{baserow}66.6 & \cellcolor{baserow}74.5 & \cellcolor{baserow}67.7 & \cellcolor{baserow}63.2 & \cellcolor{baserow}57.3 & \cellcolor{baserow}12.5 & \cellcolor{baserow}22.0 & \cellcolor{baserow}16.5 & \cellcolor{baserow}54.7 & \cellcolor{baserow}58.3 & \cellcolor{baserow}37.4 & \cellcolor{baserow}54.9 & \cellcolor{baserow}48.0 & \cellcolor{baserow}100.0 \\
 & & & Random        & 51.5 & \cellcolor{bestcell}\textbf{69.1} & 27.2 & 32.8 & 64.2 & 69.2 & 57.2 & 58.1 & 52.8 & 11.0 & 17.1 & 15.2 & 52.5 & 53.8 & 34.7 & 51.9 & \cellcolor{avgrow}44.9$_{\pm 0.60}$ & 93.5$_{\pm 1.6}$ \\
 & & & PORTAL      & \cellcolor{bestcell}\textbf{53.4} & 68.2 & \cellcolor{bestcell}\textbf{27.4} & \cellcolor{bestcell}\textbf{34.4} & \cellcolor{bestcell}\textbf{66.4} & \cellcolor{bestcell}\textbf{69.9} & \cellcolor{bestcell}\textbf{67.9} & \cellcolor{bestcell}\textbf{60.4} & \cellcolor{bestcell}\textbf{55.1} & \cellcolor{bestcell}\textbf{12.7} & \cellcolor{bestcell}\textbf{20.5} & \cellcolor{bestcell}\textbf{16.3} & \cellcolor{bestcell}\textbf{56.2} & \cellcolor{bestcell}\textbf{60.7} & \cellcolor{bestcell}\textbf{36.0} & \cellcolor{bestcell}\textbf{53.5} & \cellcolor{avgrow}\cellcolor{bestcell}\textbf{47.4}$_{\pm 0.56}$ & \cellcolor{bestcell}\textbf{98.8}$_{\pm 1.5}$ \\
\cmidrule(lr){3-22}
 & & \multirow{3}{*}{SigLIP}
 & \cellcolor{baserow}Baseline$^\dag$ & \cellcolor{baserow}55.8 & \cellcolor{baserow}68.0 & \cellcolor{baserow}27.2 & \cellcolor{baserow}34.0 & \cellcolor{baserow}68.7 & \cellcolor{baserow}73.6 & \cellcolor{baserow}69.1 & \cellcolor{baserow}66.0 & \cellcolor{baserow}61.5 & \cellcolor{baserow}13.3 & \cellcolor{baserow}25.3 & \cellcolor{baserow}19.1 & \cellcolor{baserow}57.5 & \cellcolor{baserow}60.8 & \cellcolor{baserow}28.1 & \cellcolor{baserow}56.6 & \cellcolor{baserow}49.0 & \cellcolor{baserow}100.0 \\
 & & & Random        & 51.8 & 67.9 & 26.7 & \cellcolor{bestcell}\textbf{35.1} & 56.8 & 60.6 & 46.5 & 52.3 & 43.8 & 7.9 & 8.9 & 12.8 & 47.1 & 57.5 & 25.6 & 45.9 & \cellcolor{avgrow}40.5$_{\pm 0.58}$ & 82.5$_{\pm 1.1}$ \\
 & & & PORTAL      & \cellcolor{bestcell}\textbf{55.5} & \cellcolor{bestcell}\textbf{68.3} & \cellcolor{bestcell}\textbf{27.9} & 34.4 & \cellcolor{bestcell}\textbf{67.8} & \cellcolor{bestcell}\textbf{71.6} & \cellcolor{bestcell}\textbf{68.3} & \cellcolor{bestcell}\textbf{64.3} & \cellcolor{bestcell}\textbf{60.0} & \cellcolor{bestcell}\textbf{13.6} & \cellcolor{bestcell}\textbf{24.8} & \cellcolor{bestcell}\textbf{17.9} & \cellcolor{bestcell}\textbf{57.6} & \cellcolor{bestcell}\textbf{64.6} & \cellcolor{bestcell}\textbf{28.6} & \cellcolor{bestcell}\textbf{54.2} & \cellcolor{avgrow}\cellcolor{bestcell}\textbf{48.7}$_{\pm 0.56}$ & \cellcolor{bestcell}\textbf{99.3}$_{\pm 1.5}$ \\
\midrule
\multirow{3}{*}{LLaMA 3.1} & \multirow{3}{*}{8B} & \multirow{3}{*}{CLIP}
 & \cellcolor{baserow}Baseline$^\dag$ & \cellcolor{baserow}58.5 & \cellcolor{baserow}73.8 & \cellcolor{baserow}31.3 & \cellcolor{baserow}36.8 & \cellcolor{baserow}67.0 & \cellcolor{baserow}73.0 & \cellcolor{baserow}71.7 & \cellcolor{baserow}68.1 & \cellcolor{baserow}57.2 & \cellcolor{baserow}13.7 & \cellcolor{baserow}21.8 & \cellcolor{baserow}15.1 & \cellcolor{baserow}64.5 & \cellcolor{baserow}63.0 & \cellcolor{baserow}39.8 & \cellcolor{baserow}57.5 & \cellcolor{baserow}50.8 & \cellcolor{baserow}100.0 \\
 & & & Random        & 58.6 & 74.8 & \cellcolor{bestcell}\textbf{31.7} & 35.3 & 67.6 & 73.4 & 69.5 & 67.5 & 56.2 & 13.9 & 19.6 & 15.0 & 61.8 & 66.2 & 38.5 & \cellcolor{bestcell}\textbf{57.0} & \cellcolor{avgrow}50.4$_{\pm 0.58}$ & 99.2$_{\pm 1.4}$ \\
 & & & PORTAL      & \cellcolor{bestcell}\textbf{62.2} & \cellcolor{bestcell}\textbf{76.2} & 30.7 & \cellcolor{bestcell}\textbf{38.6} & \cellcolor{bestcell}\textbf{67.7} & \cellcolor{bestcell}\textbf{74.1} & \cellcolor{bestcell}\textbf{69.9} & \cellcolor{bestcell}\textbf{70.1} & \cellcolor{bestcell}\textbf{58.0} & \cellcolor{bestcell}\textbf{14.5} & \cellcolor{bestcell}\textbf{22.7} & \cellcolor{bestcell}\textbf{16.8} & \cellcolor{bestcell}\textbf{65.2} & \cellcolor{bestcell}\textbf{70.9} & \cellcolor{bestcell}\textbf{40.4} & 56.9 & \cellcolor{avgrow}\cellcolor{bestcell}\textbf{52.2}$_{\pm 0.57}$ & \cellcolor{bestcell}\textbf{102.7}$_{\pm 1.5}$ \\
\midrule
\multirow{3}{*}{LLaMA 3.2} & \multirow{3}{*}{1B} & \multirow{3}{*}{CLIP}
 & \cellcolor{baserow}Baseline$^\dag$ & \cellcolor{baserow}46.5 & \cellcolor{baserow}66.2 & \cellcolor{baserow}26.9 & \cellcolor{baserow}33.6 & \cellcolor{baserow}59.7 & \cellcolor{baserow}58.4 & \cellcolor{baserow}56.2 & \cellcolor{baserow}46.1 & \cellcolor{baserow}43.7 & \cellcolor{baserow}10.1 & \cellcolor{baserow}12.8 & \cellcolor{baserow}10.8 & \cellcolor{baserow}53.5 & \cellcolor{baserow}56.4 & \cellcolor{baserow}37.2 & \cellcolor{baserow}47.3 & \cellcolor{baserow}41.6 & \cellcolor{baserow}100.0 \\
 & & & Random        & 46.6 & 66.9 & 26.3 & \cellcolor{bestcell}\textbf{34.2} & 59.8 & 58.1 & 56.0 & 45.2 & 43.8 & 10.4 & 12.8 & 11.0 & 54.2 & 54.9 & 37.4 & 46.9 & \cellcolor{avgrow}41.5$_{\pm 0.57}$ & 99.9$_{\pm 0.7}$ \\
 & & & PORTAL      & \cellcolor{bestcell}\textbf{47.6} & \cellcolor{bestcell}\textbf{67.1} & \cellcolor{bestcell}\textbf{29.2} & 33.7 & \cellcolor{bestcell}\textbf{62.1} & \cellcolor{bestcell}\textbf{66.0} & \cellcolor{bestcell}\textbf{59.6} & \cellcolor{bestcell}\textbf{54.2} & \cellcolor{bestcell}\textbf{47.1} & \cellcolor{bestcell}\textbf{11.4} & \cellcolor{bestcell}\textbf{16.3} & \cellcolor{bestcell}\textbf{12.0} & \cellcolor{bestcell}\textbf{54.6} & \cellcolor{bestcell}\textbf{57.3} & \cellcolor{bestcell}\textbf{38.9} & \cellcolor{bestcell}\textbf{49.7} & \cellcolor{avgrow}\cellcolor{bestcell}\textbf{44.2}$_{\pm 0.60}$ & \cellcolor{bestcell}\textbf{106.2}$_{\pm 1.3}$ \\
\midrule
\multirow{3}{*}{Qwen 2.5} & \multirow{3}{*}{7B} & \multirow{3}{*}{CLIP}
 & \cellcolor{baserow}Baseline$^\dag$ & \cellcolor{baserow}66.9 & \cellcolor{baserow}77.5 & \cellcolor{baserow}33.3 & \cellcolor{baserow}44.7 & \cellcolor{baserow}65.6 & \cellcolor{baserow}74.6 & \cellcolor{baserow}70.7 & \cellcolor{baserow}71.2 & \cellcolor{baserow}57.2 & \cellcolor{baserow}15.6 & \cellcolor{baserow}23.0 & \cellcolor{baserow}19.0 & \cellcolor{baserow}54.0 & \cellcolor{baserow}69.0 & \cellcolor{baserow}42.3 & \cellcolor{baserow}56.6 & \cellcolor{baserow}52.6 & \cellcolor{baserow}100.0 \\
 & & & Random        & 62.1 & 73.9 & 27.7 & 42.0 & 63.7 & 70.9 & 67.7 & 60.1 & 51.0 & 13.6 & 17.6 & 14.6 & 56.3 & 60.0 & \cellcolor{bestcell}\textbf{42.1} & 53.9 & \cellcolor{avgrow}48.6$_{\pm 0.56}$ & 92.4$_{\pm 0.9}$ \\
 & & & PORTAL      & \cellcolor{bestcell}\textbf{65.7} & \cellcolor{bestcell}\textbf{76.7} & \cellcolor{bestcell}\textbf{31.9} & \cellcolor{bestcell}\textbf{44.8} & \cellcolor{bestcell}\textbf{65.5} & \cellcolor{bestcell}\textbf{75.9} & \cellcolor{bestcell}\textbf{70.9} & \cellcolor{bestcell}\textbf{68.0} & \cellcolor{bestcell}\textbf{55.4} & \cellcolor{bestcell}\textbf{14.9} & \cellcolor{bestcell}\textbf{18.9} & \cellcolor{bestcell}\textbf{17.1} & \cellcolor{bestcell}\textbf{57.9} & \cellcolor{bestcell}\textbf{64.8} & 41.5 & \cellcolor{bestcell}\textbf{56.1} & \cellcolor{avgrow}\cellcolor{bestcell}\textbf{51.7}$_{\pm 0.58}$ & \cellcolor{bestcell}\textbf{98.3}$_{\pm 1.4}$ \\
\midrule
\multirow{3}{*}{Qwen 2.5} & \multirow{3}{*}{0.5B} & \multirow{3}{*}{CLIP}
 & \cellcolor{baserow}Baseline$^\dag$ & \cellcolor{baserow}43.6 & \cellcolor{baserow}58.6 & \cellcolor{baserow}26.7 & \cellcolor{baserow}31.7 & \cellcolor{baserow}52.2 & \cellcolor{baserow}53.9 & \cellcolor{baserow}47.9 & \cellcolor{baserow}38.7 & \cellcolor{baserow}35.7 & \cellcolor{baserow}11.1 & \cellcolor{baserow}9.4 & \cellcolor{baserow}9.7 & \cellcolor{baserow}45.7 & \cellcolor{baserow}50.7 & \cellcolor{baserow}39.4 & \cellcolor{baserow}46.0 & \cellcolor{baserow}37.6 & \cellcolor{baserow}100.0 \\
 & & & Random        & 43.5 & 58.5 & 26.8 & \cellcolor{bestcell}\textbf{33.1} & 52.5 & 54.2 & 48.9 & 37.9 & 35.6 & 10.5 & 9.5 & 9.8 & \cellcolor{bestcell}\textbf{44.2} & \cellcolor{bestcell}\textbf{51.5} & \cellcolor{bestcell}\textbf{40.5} & 45.5 & \cellcolor{avgrow}37.7$_{\pm 0.58}$ & 100.2$_{\pm 1.0}$ \\
 & & & PORTAL      & \cellcolor{bestcell}\textbf{44.2} & \cellcolor{bestcell}\textbf{59.1} & \cellcolor{bestcell}\textbf{27.4} & 31.3 & \cellcolor{bestcell}\textbf{58.1} & \cellcolor{bestcell}\textbf{57.6} & \cellcolor{bestcell}\textbf{56.0} & \cellcolor{bestcell}\textbf{46.4} & \cellcolor{bestcell}\textbf{41.8} & \cellcolor{bestcell}\textbf{11.8} & \cellcolor{bestcell}\textbf{14.6} & \cellcolor{bestcell}\textbf{11.6} & 41.4 & 51.2 & 39.7 & \cellcolor{bestcell}\textbf{45.6} & \cellcolor{avgrow}\cellcolor{bestcell}\textbf{39.9}$_{\pm 0.56}$ & \cellcolor{bestcell}\textbf{106.1}$_{\pm 1.3}$ \\
\midrule
\multirow{3}{*}{OLMo 2} & \multirow{3}{*}{1B} & \multirow{3}{*}{CLIP}
 & \cellcolor{baserow}Baseline$^\dag$ & \cellcolor{baserow}46.9 & \cellcolor{baserow}64.3 & \cellcolor{baserow}25.0 & \cellcolor{baserow}33.9 & \cellcolor{baserow}62.7 & \cellcolor{baserow}63.1 & \cellcolor{baserow}63.2 & \cellcolor{baserow}57.6 & \cellcolor{baserow}46.3 & \cellcolor{baserow}10.5 & \cellcolor{baserow}17.2 & \cellcolor{baserow}12.2 & \cellcolor{baserow}57.3 & \cellcolor{baserow}54.8 & \cellcolor{baserow}37.6 & \cellcolor{baserow}48.1 & \cellcolor{baserow}43.8 & \cellcolor{baserow}100.0 \\
 & & & Random       & 45.7 & 58.8 & 24.6 & 31.2 & 59.4 & 59.5 & 59.8 & 49.4 & 42.4 & 9.7 & 13.6 & 11.0 & 50.2 & 50.9 & \cellcolor{bestcell}\textbf{37.5} & 43.9 & \cellcolor{avgrow}40.5$_{\pm 0.57}$ & 92.8$_{\pm 1.0}$ \\
 & & & PORTAL      & \cellcolor{bestcell}\textbf{47.2} & \cellcolor{bestcell}\textbf{61.8} & \cellcolor{bestcell}\textbf{26.8} & 31.2 & \cellcolor{bestcell}\textbf{61.8} & \cellcolor{bestcell}\textbf{61.8} & \cellcolor{bestcell}\textbf{60.2} & \cellcolor{bestcell}\textbf{51.5} & \cellcolor{bestcell}\textbf{44.2} & \cellcolor{bestcell}\textbf{9.8} & \cellcolor{bestcell}\textbf{15.0} & \cellcolor{bestcell}\textbf{11.1} & \cellcolor{bestcell}\textbf{54.8} & \cellcolor{bestcell}\textbf{55.3} & 36.2 & \cellcolor{bestcell}\textbf{46.3} & \cellcolor{avgrow}\cellcolor{bestcell}\textbf{42.2$_{\pm 0.59}$} & \cellcolor{bestcell}\textbf{96.3$_{\pm 1.0}$} \\
\bottomrule
\end{tabular}}
\end{table*}

We test whether \name, built without any paired image-caption data, matches projector alignment trained on 558k image-caption pairs across LLM backbones and vision encoders. 

We evaluate this question by inserting a PORTAL-constructed projector into the standard LLaVA pipeline and then running the same LLaVA-665k instruction tuning for every configuration. We compare against two baselines: \textit{(i)} the standard trained LLaVA projector, obtained from 558k paired captions, and \textit{(ii)} a randomly initialized projector which skips projector alignment entirely. This setup isolates the contribution of projector initialization: after initialization, all models receive identical joint training.

We test seven (vision encoder, LLM) pairs, built from three 7B to 8B LLMs (Vicuna~1.5~7B~\cite{chiang2023vicuna}, LLaMA~3.1~8B~\cite{dubey2024llama3}, Qwen~2.5~7B~\cite{yang2024qwen2}), three smaller LLMs for cross-scale checks (LLaMA~3.2~1B, Qwen~2.5~0.5B, OLMo~2~1B~\cite{olmo2_2025}), and two vision encoders (CLIP ViT-L/14-336~\cite{radford2021clip}, SigLIP-SO400M~\cite{zhai2023siglip}). Table~\ref{tab:main_results} lists the pairs. 
The trained baseline uses the standard LLaVA Stage-I corpus of 558k LCS image-caption pairs, and all configurations share the same LLaVA-665k instruction-tuning data for Stage-II. The 16 evaluation benchmarks are grouped into the four Cambrian-1 categories. The full benchmark list, citations, and evaluation protocol are in Appendix~\ref{app:eval_setup}.

\subsection{PORTAL Matches Trained Projector Alignment Without Paired Data}
\label{sec:exp_main}

We find that our closed-form projector \name requiring \textit{no paired captions or gradient updates} achieves comparable downstream performance to conventional alignment pipelines trained on 558k image-caption pairs, across a range of vision encoder and LLM combinations (Table~\ref{tab:main_results}).
In particular, using LLaMA~3.2~1B/CLIP and Qwen~2.5~0.5B/CLIP, PORTAL reaches $\mathbf{106.2\%}$ and $\mathbf{106.1\%}$ of the trained baseline, respectively. Using LLaMA~3.1~8B/CLIP, it exceeds the trained baseline, reaching $\mathbf{102.7\%}$. On Vicuna~1.5 with CLIP and SigLIP, Qwen~2.5~7B, and OLMo~2~1B, PORTAL remains very close to the trained projector, reaching $98.8\%$, $99.3\%$, $98.3\%$, and $96.3\%$ of the trained baseline, respectively.
Thus, PORTAL removes the requirement for paired training data and optimization cost of projector alignment while preserving almost all of its downstream benefit after instruction tuning.

The advantage of PORTAL is clearest where the random projector fails. On SigLIP, the Random baseline collapses to $\mathbf{82.5\%}$ of the trained projector. PORTAL closes that gap to $\mathbf{99.3\%}$. This is what we would expect if the alignment stage were dominated by distributional matching rather than caption supervision.

Two settings sit below the trained baseline's bootstrap CI, Qwen 2.5 7B/CLIP and OLMo 2 1B/CLIP, with the gaps concentrated on OCR and Chart benchmarks. This matches the observation in Section~\ref{sec:task_level} that OCR and Chart drive most of the Stage-I gain over the no-pretraining baseline, supporting our reading that character-level spatial structure is the regime where paired captions teach what marginal statistics alone cannot recover. 
Importantly, this phenomenon is not specific to PORTAL.
Other unpaired distribution-matching approaches, including CORAL~\cite{sun2016deep} and
Gromov-Wasserstein~\cite{alvarez2018gromov} alignment, also recover most of the performance of paired Stage-I training (Appendix~\ref{app:alignment_baselines}).
This provides additional evidence that much of the benefit of Stage-I can be obtained by bringing the two modalities into a compatible feature distribution, rather than requiring a uniquely learned image-text correspondence.

A category-level analysis across all seven configurations further shows that this behavior is not systematic across OCR tasks: PORTAL retains, on average, 100.7\%, 102.0\%, 102.4\%, and 100.4\% of the trained-baseline performance on Knowledge, General, OCR, and Vision-Centric tasks, respectively (Appendix~\ref{app:category_breakdown}).
We further test PORTAL beyond the standard LLaVA-665k Stage-II setting.
Under substantially larger-scale multimodal instruction tuning with 3.2M samples, PORTAL slightly outperforms the conventionally Stage-I-trained initialization (Appendix~\ref{app:large_sft}).
We also evaluate object hallucination on POPE~\cite{li2023evaluating} and find that PORTAL matches or exceeds the trained Stage-I baseline in every setting (Appendix~\ref{app:pope}).

\subsection{PORTAL Reduces the Projector Alignment Budget}
\label{sec:exp_data}

\begin{wrapfigure}[21]{r}{0.45\linewidth}
    \vspace{-2.2em}
    \centering
    \includegraphics[width=\linewidth]{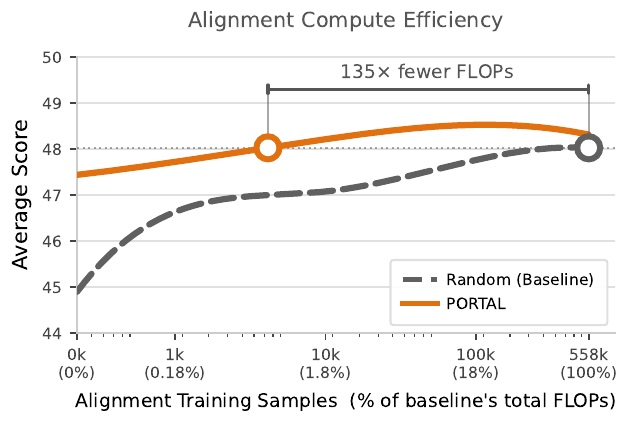}
    \caption{\textbf{Alignment compute efficiency} on Vicuna~1.5~7B/CLIP. 
    Average over the 16 benchmarks as a function of the projector alignment budget, with identical Stage-II tuning on LLaVA-665k. PORTAL stays above Random at every budget. 
    The marked point indicates the crossover at roughly $4$k captions, where PORTAL matches the $558$k-trained baseline at $\sim$$135{\times}$ fewer alignment FLOPs. 
    The horizontal dashed line shows the $558$k baseline.
    }
    \label{fig:data_efficiency}
    \vspace{-1.0em}
\end{wrapfigure}

The previous experiment shows that PORTAL can provide a strong projector without paired optimization. We next ask whether PORTAL can also serve as a warm start for projector alignment. Instead of using PORTAL to skip this stage entirely, we initialize the projector with PORTAL and then continue projector alignment with progressively smaller subsets of LCS-558k. This experiment evaluates whether closed-form alignment can reduce the amount of paired data needed for subsequent projector training.

We vary the projector alignment budget over $\{0, 1\text{k}, 10\text{k}, 100\text{k}, 558\text{k}\}$ paired captions from LCS-558k on Vicuna 1.5 7B/CLIP, starting from either a Random or a PORTAL projector. The $0$ point skips projector alignment entirely and proceeds directly to Stage-II from the initialization, and the $558$k point is the standard LLaVA baseline. Stage-II tuning on LLaVA-665k is identical for every configuration.

PORTAL provides a better starting point at every projector alignment budget than the random initialization (Figure~\ref{fig:data_efficiency}). 
With zero projector training, it already outperforms random initialization. As more paired captions are added, the PORTAL-initialized projector remains consistently above the randomly initialized projector. PORTAL reaches the performance of the full 558k trained projector baseline using only around 4k captions, corresponding to roughly $\mathbf{135{\times}}$ fewer projector alignment FLOPs.
Beyond matching the trained baseline at lower compute, PORTAL with as few as 10k paired captions exceeds the full 558k-trained projector baseline and continues to widen the gap up to 100k.
PORTAL therefore offers two practical advantages: it reaches the trained baseline at substantially lower alignment cost, and it surpasses that baseline once even a small paired-data budget is available.

This result strengthens the main claim that much of projector alignment is a distributional conditioning problem. The projector has a large solution space, and many mappings can make visual features compatible with the LLM input space. PORTAL identifies one such compatible solution directly in closed form. This solution is not necessarily the exact optimum selected by supervised projector training but it lies close to the high-performing region, providing strong performance without paired optimization and a small amount of paired data can further refine it within this region. Additional projector training thus mainly refines a projector that is already compatible.

\newpage
\subsection{Ablations: Projector Architecture, Image Pool, and Rank}
\label{sec:exp_robust}

\begin{wraptable}[13]{r}{0.42\linewidth}
\vspace{-1.0em}
\centering
\setlength{\tabcolsep}{4pt}
\renewcommand{\arraystretch}{1.05}
\caption{\textbf{Linear vs.\ MLP-2x-GELU projector} on LLaMA~3.2~1B/CLIP. Full per-benchmark breakdown in Appendix~\ref{app:linear_vs_mlp}.}
\label{tab:linear_vs_mlp_inline}
\begin{tabular}{l l c c}
\toprule
\textbf{Projector} & \textbf{Init} & \textbf{Avg} & \textbf{\%BL} \\
\midrule
\multirow{3}{*}{MLP} & \cellcolor{baserow}Trained$^\dag$ & \cellcolor{baserow}$41.6$ & \cellcolor{baserow}$100.0$ \\
                              & Random         & \cellcolor{avgrow}\underline{$41.5$} & \underline{$99.9$} \\
                              & PORTAL   & \cellcolor{avgrow}\cellcolor{bestcell}$\mathbf{44.2}$ & \cellcolor{bestcell}$\mathbf{106.2}$ \\
\midrule
\multirow{3}{*}{Linear} & \cellcolor{baserow}Trained$^\dag$ & \cellcolor{baserow}$39.9$ & \cellcolor{baserow}$100.0$ \\
                                 & Random         & \cellcolor{avgrow}\underline{$37.6$} & \underline{$94.2$} \\
                                 & PORTAL   & \cellcolor{avgrow}\cellcolor{bestcell}$\mathbf{44.7}$ & \cellcolor{bestcell}$\mathbf{111.8}$ \\
\bottomrule
\end{tabular}
\vspace{-0.8em}
\end{wraptable}

A key advantage of PORTAL is that it is a closed-form geometric construction rather than a learned projector tied to a particular architecture or paired dataset. We therefore test whether its gains persist when the projector is simplified and when the unlabeled image pool used to estimate vision-side statistics is changed.

For architecture, we replace the standard 2-layer MLP-GELU projector with a single linear layer, keeping all other components fixed (Table~\ref{tab:linear_vs_mlp_inline}). If PORTAL's gains depended on the expressivity of the MLP, they should weaken in this setting. Instead, the gain becomes larger. With the MLP projector, PORTAL reaches $\mathbf{106.2\%}$ of the trained baseline against Random's $99.9\%$. With the linear projector, it reaches $\mathbf{111.8\%}$ against Random's $94.2\%$, widening the Random-to-PORTAL gap from $6.3$ to $17.6$ percentage points. This shows that PORTAL's advantage does not come from a particular nonlinearity or hidden expansion. It comes from placing the visual features into a better-matched LLM input distribution.
We also test the opposite regime by substantially increasing connector capacity.
Using the C-Abstractor~\cite{cha2024honeybee}, which contains approximately $19\times$ more parameters than the two-layer MLP and additionally performs token mixing and sequence compression, PORTAL remains essentially indistinguishable from conventional Stage-I training (42.93 vs. 42.98 average score; Appendix~\ref{app:cabstractor}).

For the image pool, we vary the unlabeled image pool used to estimate the vision-side statistics. On LLaMA~3.2~1B/CLIP, we compare a small single-domain pool, COCO \texttt{val2017} with $1024$ images, against a larger and more diverse Cambrian-737k subset spanning $9$ sources with $4096$ images. The small COCO pool matches or exceeds the diversified pool on $12$ of $15$ benchmarks and obtains a higher average score ($\mathbf{44.0}$ vs.\ $43.4$). This suggests that PORTAL does not require a carefully curated or highly diverse image pool. A small unlabeled natural-image pool is sufficient to estimate the source statistics well enough for downstream instruction tuning in this setting. Per-benchmark results are reported in Appendix~\ref{app:image_pool_ablation}.

Additionally, the PORTAL construction depends on a rank $k$ used to define the PCA subspace. We vary $k$ over $\{64, 128, 256, 512, 1024\}$ on LLaMA~3.2~1B/CLIP, constructing each projector from the same $1024$ COCO images and running identical Stage-II tuning. The method is robust to this choice (Appendix~\ref{app:rank_ablation}): every $k$ outperforms the trained baseline by at least $\sim$$4\%$. Performance is flat from $k{=}64$ to $k{=}256$ ($\sim$$43.3$ Avg), peaks at $k{=}512$ ($44.2$), and decreases marginally at $k{=}1024$ ($44.0$). The plateau at low ranks indicates that the dominant PCA directions already capture most of the cross-modal structure. The small drop at $k{=}1024$ is consistent with noisy eigenvectors diluting the transport map. We adopt $k{=}512$ as the default.

\subsection{Target Layer Choice}
\label{sec:exp_ctx}

\begin{wraptable}[12]{r}{0.40\linewidth}
\vspace{-2.2em}
\centering
\setlength{\tabcolsep}{4pt}
\renewcommand{\arraystretch}{1.05}
\caption{\textbf{Layer ablation for PORTAL-ctx} on LLaMA~3.1~8B/CLIP. Avg over the 16-benchmark suite, \%BL relative to trained Stage-I.}
\label{tab:ctx_layer_inline}
\begin{tabular}{l c c}
\toprule
\textbf{Target} & \cellcolor{avgrow}\textbf{Avg} & \textbf{\%BL} \\
\midrule
\cellcolor{baserow}Trained$^\dag$ & \cellcolor{baserow}$51.3$ & \cellcolor{baserow}$100.0$ \\
Random                 & \cellcolor{avgrow}$51.4$ & $100.2$ \\
\midrule
$\ell{=}0$ (embed)     & \cellcolor{avgrow}\cellcolor{bestcell}$\mathbf{52.5}$ & \cellcolor{bestcell}$\mathbf{102.3}$ \\
$\ell{=}4$             & \cellcolor{avgrow}\underline{$52.3$} & \underline{$101.9$} \\
$\ell{=}8$             & \cellcolor{avgrow}$52.0$ & $101.4$ \\
\bottomrule
\end{tabular}
\vspace{-0.8em}
\end{wraptable}

PORTAL targets the LLM's input embedding distribution by default. The PORTAL-ctx$\ell$ variant (Section~\ref{sec:method_construction}) generalizes the construction to deeper targets, matching visual features to hidden states after LLM block $\ell$ instead. We use this variant to ablate the choice of target layer.

Targeting the embedding table ($\ell=0$) is the strongest choice on LLaMA~3.1~8B/CLIP, reaching $\mathbf{102.3\%}$ of the trained projector alignment baseline (Table~\ref{tab:ctx_layer_inline}). Contextualized targets at $\ell=4$ and $\ell=8$ remain close, reaching $101.9\%$ and $101.4\%$, respectively. We therefore use the embedding table as the default target.

\section{Conclusion}\label{sec:Conclusion}
This work rethinks projector alignment in multimodal LLMs. Our results suggest that this stage primarily performs coarse alignment, mapping visual features into a language-model-compatible input space rather than learning fine-grained visual-language correspondences. This view is supported by the early saturation of alignment data, where 10 to 20\% of LCS-558k recovers most downstream gains, and by the linear interpolability of projectors trained on different datasets from a shared initialization.

Motivated by this finding, we introduced PORTAL, a training-free projector initialization. PORTAL uses only per-modality summary statistics, requires no paired image-caption data or gradient-based optimization, and remains close to the 558k-trained baseline across the evaluated model pairs. As an initialization, it reaches the trained baseline with roughly $135\times$ lower alignment compute. These results suggest that projector alignment can often be replaced by, or reduced to, a lightweight feature-distribution matching step, lowering the cost and paired-data requirements for building multimodal LLMs.

Several natural directions remain for future work. Our study focuses on projector alignment with frozen vision and language backbones during alignment, single-image inputs, and simple projector architectures. Extending the analysis to richer connectors, long-context grounding, multi-image reasoning, and additional modalities such as audio, video, and 3D would further clarify how broadly this coarse-alignment view applies. We discuss other limitations of \name in Appendix \ref{sec:Limitations}.

\section*{Acknowledgements}
We thank Yulun Jiang, Siba Smarak Panigrahi, and Petar Damjanović for their valuable suggestions, which helped improve the clarity of the manuscript.
We gratefully acknowledge the support of the Swiss National Science Foundation (SNSF) starting grant TMSGI2\_226252/1, SNSF grant IC00I0\_231922, SNSF grant 10.004.411, Swiss Cancer League (KFS-6340-02-2025-R), Kamprad Foundation (Cancer Immunotherapy Networking Program), and the Swiss AI Initiative Large Call (ab006). Maria Brbi\'c is a CIFAR Fellow in the Multiscale Human Program.

\medskip
\small
\bibliography{bibliography}

\appendix
\newpage
\section{Limitations}\label{sec:Limitations}
Our analysis and method address a specific slice of the multimodal-LLM pipeline: Stage-I projector alignment with frozen vision and language backbones. Several caveats follow from that scope.
\textit{(i)} The empirical analysis in Section~\ref{sec:motivation} runs on the LLaVA~1.5 architecture with a 2-layer MLP projector. Richer connectors (resamplers, perceiver-style cross-attention, multi-stage abstractors) may not exhibit the same linear interpolability, and we do not test them here.
\textit{(ii)} PORTAL is constructed from a Gaussian approximation of each modality. The approximation captures means, principal directions, and per-axis spread, but no higher-order structure. Appendix~\ref{app:gaussianity_audit} quantifies how far real source and target distributions deviate from this assumption. We hypothesize that the residual gap on OCR and Chart benchmarks reflects this missing higher-order structure (character-level spatial layout).
\textit{(iii)} As in LLaVA-1.5, all experiments keep the vision encoder frozen and train only the projector and the LLM in Stage-II. We do not study end-to-end fine-tuning that also updates the vision encoder.
\textit{(iv)} Our 16-benchmark suite follows the Cambrian-1 grouping and inherits its biases. It does not stress long-context grounding or multi-image reasoning, which may behave differently.
\textit{(v)} We instantiate PORTAL for vision-to-language alignment. We have not tested it on audio or video encoders. The present results should be regarded as a first existence proof rather than a finished construction.
\textit{(vi)} Our reading of Stage-I as geometric conditioning is the simplest hypothesis consistent with our data. We do not claim it is the unique mechanism, and other interpretations remain possible.
\textit{(vii)} The closed-form construction exceeds trained Stage-I or stays within $1.6$ average points of it across the configurations we test. We make no broader claim that this holds across all multimodal-LLM regimes. Richer benchmarks or backbones may reveal regimes where trained Stage-I genuinely outperforms PORTAL.

\section{Training Details and Compute Resources}
\label{app:compute}

All experiments follow the standard two-stage LLaVA-1.5~\cite{liu2024llava} training recipe. Training is performed on $8\times$ NVIDIA A100 80\,GB GPUs with DeepSpeed ZeRO-2 and bf16 mixed precision.

\paragraph{Hyperparameters.}
Table~\ref{tab:hyperparams} summarizes the training configuration for both stages. All runs use the AdamW optimizer with $(\beta_1, \beta_2) = (0.9, 0.999)$ and no gradient clipping.

\begin{table}[!htbp]
\centering
\caption{\textbf{Training hyperparameters.} Identical to LLaVA-1.5.}
\label{tab:hyperparams}
\small
\begin{tabular}{@{}lcc@{}}
\toprule
& \textbf{Stage I (Pretrain)} & \textbf{Stage II (Finetune)} \\
\midrule
Learning rate         & $1 \times 10^{-3}$ & $2 \times 10^{-5}$ \\
Schedule              & Cosine             & Cosine              \\
Warmup                & 200 steps          & 3\% ratio           \\
Weight decay          & $5 \times 10^{-2}$ & 0                   \\
Per-device batch size  & 32                 & 4                   \\
Gradient accumulation  & 1                  & 4                   \\
Effective batch size   & 256                & 128                 \\
Epochs                & 1                  & 1                   \\
Trainable parameters  & Projector only     & Projector + LLM     \\
\bottomrule
\end{tabular}
\end{table}

\paragraph{Compute budget.}
Each Stage-I run completes in under one hour for MobileLLaMA~1.4B and in under four hours for Vicuna~7B on $8\times$A100 GPUs. Stage-II instruction tuning takes between two and ten hours depending on model scale ($2.4$\,h for Qwen~0.5B, $5.4$\,h for LLaMA~8B, $10$\,h for Qwen~7B). The cost of constructing a PORTAL projector is negligible: the closed-form solution requires only forward passes on approximately $1{,}024$ images and completes in under $30$ seconds on a single RTX~3090.

\paragraph{Seeds.}
The cross-seed analysis in Section~\ref{sec:solution} compares two Stage-I projectors trained with seeds $42$ and $120$ on the same LCS-558k data. All other experiments use seed $42$. Bootstrap confidence intervals are computed from $B{=}1{,}000$ resamples paired across methods within each (encoder, LLM) group.

\paragraph{Datasets and licenses.}
Table~\ref{tab:datasets} lists the datasets used in our experiments.

\begin{table}[!htbp]
\centering
\caption{\textbf{Datasets used in our experiments.}}
\label{tab:datasets}
\small
\setlength{\tabcolsep}{3pt}
\begin{tabular}{@{}llll@{}}
\toprule
\textbf{Dataset} & \textbf{Used for} & \textbf{License} & \textbf{Source} \\
\midrule
\multicolumn{4}{@{}l}{\textit{Training data}} \\
\midrule
LCS-558k          & Stage-I pretrain    & CC-BY-4.0       & \cite{liu2024llava} \\
ShareGPT-1.2M     & Stage-I pretrain    & CC-BY-NC-4.0    & \cite{chen2024sharegpt4v} \\
LLaVA-665k        & Stage-II finetune   & CC-BY-4.0       & \cite{liu2024llava} \\
Cambrian-737k     & PORTAL image pool      & Apache-2.0      & \cite{tong2024cambrian} \\
COCO val2017       & PORTAL image pool      & CC-BY-4.0       & \cite{lin2014coco} \\
\midrule
\multicolumn{4}{@{}l}{\textit{Knowledge benchmarks}} \\
\midrule
AI2D               & Evaluation          & CC-BY-SA-4.0    & \cite{kembhavi2016ai2d} \\
ScienceQA          & Evaluation          & CC-BY-NC-SA-4.0 & \cite{lu2022learn} \\
MathVista          & Evaluation          & CC-BY-SA-4.0    & \cite{lu2024mathvista} \\
MMMU               & Evaluation          & Apache-2.0      & \cite{yue2024mmmu} \\
\midrule
\multicolumn{4}{@{}l}{\textit{General benchmarks}} \\
\midrule
GQA                & Evaluation          & CC-BY-4.0       & \cite{hudson2019gqa} \\
MME                & Evaluation          & Research only   & \cite{fu2024mme} \\
SeedBench          & Evaluation          & Apache-2.0      & \cite{li2024seed} \\
MMBench            & Evaluation          & CC-BY-4.0       & \cite{liu2025mmbench} \\
\midrule
\multicolumn{4}{@{}l}{\textit{OCR \& Chart benchmarks}} \\
\midrule
TextVQA            & Evaluation          & CC-BY-4.0       & \cite{singh2019towards} \\
OCRBench-v2        & Evaluation          & CC-BY-NC-4.0    & \cite{liu2024ocrbenchv2} \\
DocVQA             & Evaluation          & Apache-2.0      & \cite{mathew2021docvqa} \\
ChartQA            & Evaluation          & GPL-3.0         & \cite{masry2022chartqa} \\
\midrule
\multicolumn{4}{@{}l}{\textit{Vision-Centric benchmarks}} \\
\midrule
CV-Bench           & Evaluation          & Apache-2.0      & \cite{tong2024cambrian} \\
BLINK              & Evaluation          & Apache-2.0      & \cite{fu2024blink} \\
RealWorldQA        & Evaluation          & CC-BY-ND-4.0    & \cite{xai2024realworldqa} \\
\bottomrule
\end{tabular}
\end{table}

\section{Evaluation Setup: Benchmarks and Protocol}
\label{app:eval_setup}

All models are evaluated on $16$ benchmarks grouped into four categories adapted from the Cambrian-1~\cite{tong2024cambrian} taxonomy (Knowledge, General, OCR, Vision), summarized in Table~\ref{tab:bench_suite}. Relative to the original Cambrian-1 suite, we replace OCRBench with OCRBench-v2~\cite{liu2024ocrbenchv2} and MMVP with BLINK~\cite{fu2024blink}. All evaluations are performed using \texttt{lmms-eval} v0.4.1~\cite{zhang2024lmmseval}.

\begin{table}[!htbp]
\centering
\caption{\textbf{Evaluation benchmarks grouped by Cambrian-1 category.} $^\text{L}$\,=\,Lite subset (500 samples).}
\label{tab:bench_suite}
\small
\begin{tabular}{@{}llrl@{}}
\toprule
\textbf{Category} & \textbf{Benchmark} & \textbf{Samples} & \textbf{Metric} \\
\midrule
\multirow{4}{*}{Knowledge}
  & AI2D~\cite{kembhavi2016ai2d}        & 3{,}088  & Exact match \\
  & ScienceQA~\cite{lu2022learn}        & 2{,}017  & Exact match \\
  & MathVista~\cite{lu2024mathvista}     & 1{,}000  & LLM-as-judge \\
  & MMMU~\cite{yue2024mmmu}             &    900   & Accuracy \\
\midrule
\multirow{4}{*}{General}
  & GQA~\cite{hudson2019gqa}             & 12{,}578 & Exact match \\
  & MME~\cite{fu2024mme}                 & 2{,}374  & Perception / 20 \\
  & SeedBench$^\text{L}$~\cite{li2024seed} &    500 & Accuracy (image) \\
  & MMBench~\cite{liu2025mmbench}        & 4{,}329  & GPT-eval score \\
\midrule
\multirow{4}{*}{OCR}
  & TextVQA~\cite{singh2019towards}      & 5{,}000  & Exact match \\
  & OCRBench-v2~\cite{liu2024ocrbenchv2} & 1{,}000  & Accuracy \\
  & DocVQA~\cite{mathew2021docvqa}       & 5{,}349  & ANLS \\
  & ChartQA~\cite{masry2022chartqa}      & 2{,}500  & Relaxed accuracy \\
\midrule
\multirow{4}{*}{Vision}
  & CV-Bench-2D~\cite{tong2024cambrian}  & 1{,}438  & Accuracy \\
  & CV-Bench-3D~\cite{tong2024cambrian}  & 1{,}200  & Accuracy \\
  & BLINK~\cite{fu2024blink}             & 1{,}437  & Accuracy \\
  & RealWorldQA~\cite{xai2024realworldqa}&    765   & Exact match \\
\bottomrule
\end{tabular}
\end{table}

\paragraph{Scoring conventions.}
MME perception scores (out of $2{,}000$) are divided by $20$ to fit a $0$--$100$ scale. MMBench reports GPT-eval scores ($0$--$100$) directly. MathVista uses GPT-4o as the judge model. All other metrics are reported as percentages ($0$--$100$).

\paragraph{Lite subsets.}
SeedBench uses the $500$-sample Lite split from \texttt{lmms-eval}, denoted $^\text{L}$ in Table~\ref{tab:main_results}. Running the full SeedBench ($17{,}990$ samples) for every model configuration is computationally prohibitive.

\paragraph{Category averages.}
The Avg column in Table~\ref{tab:main_results} is the arithmetic mean of all $16$ benchmarks. Since each of the four categories contains four benchmarks, this is equivalent to the macro-average of the four category means.

\paragraph{Bootstrap confidence intervals.}
$95\%$ confidence intervals for all category averages and overall scores are computed using $B{=}1{,}000$ bootstrap resamples~\cite{efron1979bootstrap}. Resampling is paired across methods within each (encoder, LLM) group: the same bootstrap indices are applied to all models that share a backbone, so per-sample correlations are preserved and the resulting intervals reflect method differences rather than sample noise. The \%BL column reports the ratio of each model's overall score to its group baseline, bootstrapped under the same paired protocol.

\subsection{Setup for the Data-Fraction and Interpolation Experiments}
\label{app:motivation_setup}

Throughout Section~\ref{sec:motivation}, we use the LLaVA~1.5 architecture. The vision encoder is CLIP ViT-L/14 (336px), the projector is a 2-layer MLP ($1024 \to 4096 \to 4096$), and the LLM is Vicuna~1.5~7B. The data-fraction experiment in Section~\ref{sec:task_level} uses the smaller MobileLLaMA~1.4B backbone~\cite{chu2024mobilevlm} with a $1024 \to 2048 \to 2048$ projector to keep the $11$ data fractions on two corpora computationally feasible. Per-benchmark numbers for this experiment are reported in Appendix~\ref{app:lcs558k}.
By default, we use LCS-558k (LAION/CC/SBU filtered pairs with BLIP captions) for Stage-I training and the standard LLaVA-665k instruction-tuning mix for Stage-II.
Section~\ref{sec:Experiments} extends the picture to LLaMA~3.1~8B and Qwen~2.5~7B, and to SigLIP-SO400M as the vision encoder.
For two corpora, LCS-558k and ShareGPT-1.2M, we train Stage-I projectors at evenly spaced data fractions ($0$ to $100\%$), apply identical LLaVA-665k instruction tuning on each, and evaluate on $16$ benchmarks grouped into the four Cambrian-1 categories (Knowledge, General, OCR, Vision-Centric).

\subsection{Cross-Seed vs.\ Cross-Data Interpolation}
\label{app:cross_seed}

Section~\ref{sec:solution} reports that projectors trained on different datasets from the same initialization remain interpolable along the linear interpolation path. This appendix complements that result with a control that varies the random initialization instead of the training data, and contrasts the two cases.

\paragraph{Setup.}
We train two pairs of projectors using the LLaVA 1.5 setup described in Appendix~\ref{app:motivation_setup}. \emph{Cross-data} projectors share the same random initialization but are trained on different corpora (LCS-558k and ShareGPT-1.2M). \emph{Cross-seed} projectors share the LCS-558k corpus but use two different random initializations. For each pair $(W_A, W_B)$ we report the Frobenius distance $\|W_A - W_B\|_F$ and evaluate the linear interpolation $W_\alpha = (1-\alpha)\,W_A + \alpha\,W_B$ for $\alpha \in [0,1]$. Stage-II tuning is identical for every $W_\alpha$.

\paragraph{Results.}
The cross-seed pair is farther apart in weight space than the cross-data pair ($\|\Delta W\|_F = 92.4$ versus $63.7$), but only the cross-data pair stays inside a low-loss region under interpolation. Cross-seed CIDEr~\cite{vedantam2015cider} on the LLaVA captioning evaluation collapses from approximately $80$ at the two endpoints to $0.7$ at $\alpha = 0.5$. Cross-data projectors interpolate smoothly along the entire path and the aggregate downstream score peaks at $\alpha = 0.5$ (Figure~\ref{fig:interp_heatmap}).

\paragraph{Interpretation.}
Cross-seed projectors reach distinct loss basins separated by high-loss barriers, which is the standard picture for independently initialized networks~\cite{frankle2020linear}. Cross-data projectors trained from the same initialization remain inside one shared low-loss region. Initialization therefore selects the basin, and training data selects the particular solution reached within that basin. This is the property that PORTAL targets directly: the closed-form construction lands inside the same compatible region without using paired data.

\clearpage
\section{Full Downstream Benchmarks}
\label{app:full_benchmarks}

\subsection{LCS-558k Pretraining}
\label{app:lcs558k}

\begin{table}[h]
\centering
\caption{\textbf{Per-benchmark results for each LCS-558k data fraction.}
Architecture: CLIP ViT-L/14 (336\,px) and MobileLLaMA~1.4B~\cite{chu2024mobilevlm}, with a two-layer MLP projector ($1024 \to 2048 \to 2048$).
Rows: fraction of the $558$k-sample LCS corpus used for Stage-I pretraining ($0\%$ corresponds to no Stage-I, random projector init).
Columns: $16$ benchmarks grouped into four categories adapted from the Cambrian-1 taxonomy (Appendix~\ref{app:eval_setup}), followed by the overall average and its ratio to the $100\%$ reference.
All scores are normalized to $0$--$100$ (MME is divided by $20$, others are already percentages).
$^\text{L}$: $500$-sample Lite subset from \texttt{lmms-eval}~\cite{zhang2024lmmseval}.}
\label{tab:full_lcs558k}
\vspace{2pt}
\resizebox{\textwidth}{!}{%
\setlength{\tabcolsep}{3.5pt}
\renewcommand{\arraystretch}{1.08}
\begin{tabular}{@{}l cccc G cccc G cccc G cccc G c c@{}}
\toprule
& \multicolumn{4}{c}{Knowledge} & \multicolumn{4}{c}{General} & \multicolumn{4}{c}{OCR} & \multicolumn{4}{c}{Vision} & & \\
\cmidrule(lr){2-5} \cmidrule(lr){6-9} \cmidrule(lr){10-13} \cmidrule(lr){14-17}
\textbf{Pretrain} & \rotatebox{60}{AI2D} & \rotatebox{60}{SciQA} & \rotatebox{60}{MathV} & \rotatebox{60}{MMMU} & \rotatebox{60}{GQA} & \rotatebox{60}{MME} & \rotatebox{60}{SEED$^\text{L}$} & \rotatebox{60}{MMB} & \rotatebox{60}{TxtVQA} & \rotatebox{60}{OCR-Bv2} & \rotatebox{60}{DocVQA} & \rotatebox{60}{ChartQA} & \rotatebox{60}{CV-2D} & \rotatebox{60}{CV-3D} & \rotatebox{60}{BLINK} & \rotatebox{60}{RealWQ} & \cellcolor{avgrow}\textbf{Avg} & \textbf{\%BL} \\
\midrule
\rowcolor{baserow}
0\%   & 34.5 & 47.4 & 25.2 & 26.6 & 51.2 & 49.7 & 41.6 & 10.1 & 17.0 & 7.0  & 8.9  & 10.3 & 36.6 & 51.7 & 35.3 & 41.6 & \cellcolor{avgrow}30.9 & 80.9 \\
10\%  & 37.5 & 53.0 & 24.3 & 28.0 & 56.5 & 56.4 & 58.6 & 39.3 & 32.1 & 7.8  & 13.1 & 10.6 & 43.7 & 53.2 & 35.7 & 44.4 & \cellcolor{avgrow}37.1 & 97.1 \\
\rowcolor{altrow}
20\%  & 39.1 & 55.4 & 24.4 & 27.6 & 56.8 & 58.7 & 59.0 & 39.0 & 33.6 & 8.3  & 14.2 & 11.3 & 44.6 & 51.2 & 35.5 & 44.2 & \cellcolor{avgrow}37.7 & 98.5 \\
30\%  & 39.1 & 53.7 & 25.8 & 29.3 & 56.6 & 57.6 & 58.6 & 37.7 & 34.8 & 7.6  & 14.9 & 10.6 & 44.6 & 51.8 & 35.7 & 44.3 & \cellcolor{avgrow}37.7 & 98.5 \\
\rowcolor{altrow}
40\%  & 39.2 & 55.2 & 25.9 & 29.1 & 56.7 & 58.2 & 59.2 & 41.8 & 35.2 & 8.1  & 15.0 & 10.9 & 45.6 & 52.9 & 34.8 & 44.6 & \cellcolor{avgrow}38.3 & 100.1 \\
50\%  & 39.7 & 54.9 & 25.8 & 29.2 & 57.1 & 58.2 & 59.6 & 42.2 & 35.5 & 8.2  & 15.4 & 10.9 & 46.0 & 53.2 & 35.6 & 46.3 & \cellcolor{avgrow}\cellcolor{bestcell}\textbf{38.6} & \cellcolor{bestcell}\textbf{101.0} \\
\rowcolor{altrow}
60\%  & 38.9 & 55.3 & 27.1 & 29.0 & 56.7 & 57.6 & 59.8 & 43.0 & 35.2 & 7.8  & 15.4 & 11.0 & 44.6 & 52.6 & 35.8 & 44.1 & \cellcolor{avgrow}38.4 & 100.3 \\
70\%  & 39.4 & 55.2 & 27.1 & 29.0 & 56.9 & 58.7 & 59.6 & 41.2 & 35.6 & 7.6  & 15.2 & 11.5 & 45.5 & 52.4 & 34.6 & 45.2 & \cellcolor{avgrow}\underline{38.4} & \underline{100.5} \\
\rowcolor{altrow}
80\%  & 39.2 & 54.7 & 26.2 & 28.7 & 57.0 & 57.2 & 59.8 & 41.2 & 35.6 & 8.8  & 15.4 & 10.9 & 44.6 & 53.7 & 34.6 & 44.1 & \cellcolor{avgrow}38.2 & 100.0 \\
90\%  & 39.4 & 55.2 & 26.4 & 28.8 & 57.4 & 57.0 & 59.4 & 41.4 & 35.7 & 7.8  & 15.5 & 11.1 & 46.2 & 54.0 & 34.7 & 44.2 & \cellcolor{avgrow}38.4 & 100.4 \\
\rowcolor{altrow}
100\% & 39.6 & 53.7 & 26.7 & 28.1 & 57.0 & 57.9 & 58.6 & 39.9 & 35.7 & 7.8  & 15.7 & 11.6 & 46.4 & 52.2 & 35.9 & 45.0 & \cellcolor{avgrow}38.2 & 100.0 \\
\bottomrule
\end{tabular}}
\end{table}

\subsection{ShareGPT-1.2M Pretraining}
\label{app:sharegpt12m}

\begin{table}[!htbp]
\centering
\caption{\textbf{Per-benchmark results for each ShareGPT-1.2M data fraction.}
Layout matches Table~\ref{tab:full_lcs558k} but with the ShareGPT-1.2M corpus used for Stage-I pretraining.}
\label{tab:full_sharegpt12m}
\vspace{2pt}
\resizebox{\textwidth}{!}{%
\setlength{\tabcolsep}{3.5pt}
\renewcommand{\arraystretch}{1.08}
\begin{tabular}{@{}l cccc G cccc G cccc G cccc G c c@{}}
\toprule
& \multicolumn{4}{c}{Knowledge} & \multicolumn{4}{c}{General} & \multicolumn{4}{c}{OCR} & \multicolumn{4}{c}{Vision} & & \\
\cmidrule(lr){2-5} \cmidrule(lr){6-9} \cmidrule(lr){10-13} \cmidrule(lr){14-17}
\textbf{Pretrain} & \rotatebox{60}{AI2D} & \rotatebox{60}{SciQA} & \rotatebox{60}{MathV} & \rotatebox{60}{MMMU} & \rotatebox{60}{GQA} & \rotatebox{60}{MME} & \rotatebox{60}{SEED$^\text{L}$} & \rotatebox{60}{MMB} & \rotatebox{60}{TxtVQA} & \rotatebox{60}{OCR-Bv2} & \rotatebox{60}{DocVQA} & \rotatebox{60}{ChartQA} & \rotatebox{60}{CV-2D} & \rotatebox{60}{CV-3D} & \rotatebox{60}{BLINK} & \rotatebox{60}{RealWQ} & \cellcolor{avgrow}\textbf{Avg} & \textbf{\%BL} \\
\midrule
\rowcolor{baserow}
0\%   & 34.5 & 47.4 & 25.2 & 26.6 & 51.2 & 49.7 & 41.6 & 10.1 & 17.0 & 7.0  & 8.9  & 10.3 & 36.6 & 51.7 & 35.3 & 41.6 & \cellcolor{avgrow}30.9 & 81.4 \\
10\%  & 38.4 & 53.7 & 24.3 & 28.1 & 55.1 & 54.5 & 53.5 & 50.0 & 29.6 & 6.9  & 12.5 & 11.2 & 40.8 & 52.4 & 35.2 & 43.1 & \cellcolor{avgrow}36.8 & 97.0 \\
\rowcolor{altrow}
20\%  & 37.9 & 54.2 & 24.1 & 28.1 & 55.7 & 55.1 & 55.4 & 54.5 & 30.7 & 6.5  & 12.7 & 11.1 & 41.7 & 53.8 & 32.3 & 44.1 & \cellcolor{avgrow}37.4 & 98.4 \\
30\%  & 38.7 & 53.9 & 24.4 & 28.3 & 56.4 & 55.7 & 56.4 & 51.5 & 31.1 & 6.8  & 13.1 & 11.2 & 40.5 & 52.9 & 35.6 & 45.2 & \cellcolor{avgrow}37.6 & 99.0 \\
\rowcolor{altrow}
40\%  & 38.0 & 53.9 & 24.8 & 28.4 & 56.4 & 57.5 & 57.2 & 50.8 & 31.4 & 6.8  & 13.2 & 11.6 & 44.4 & 52.8 & 35.3 & 46.4 & \cellcolor{avgrow}\cellcolor{bestcell}\textbf{38.1} & \cellcolor{bestcell}\textbf{100.2} \\
50\%  & 38.3 & 54.4 & 26.0 & 27.7 & 56.7 & 57.0 & 54.9 & 52.3 & 31.9 & 7.3  & 14.0 & 11.6 & 41.5 & 52.8 & 34.6 & 48.2 & \cellcolor{avgrow}\underline{38.1} & \underline{100.2} \\
\rowcolor{altrow}
60\%  & 38.4 & 53.7 & 26.4 & 28.7 & 56.9 & 56.4 & 56.4 & 52.3 & 32.8 & 7.7  & 13.5 & 11.5 & 43.2 & 53.8 & 31.1 & 46.4 & \cellcolor{avgrow}38.1 & 100.2 \\
70\%  & 38.3 & 54.4 & 26.8 & 28.4 & 57.1 & 56.3 & 55.6 & 50.8 & 33.9 & 7.8  & 14.1 & 11.4 & 45.0 & 52.2 & 30.9 & 45.4 & \cellcolor{avgrow}38.0 & 100.1 \\
\rowcolor{altrow}
80\%  & 38.7 & 53.7 & 26.0 & 28.6 & 57.1 & 57.3 & 55.4 & 49.2 & 34.1 & 7.9  & 14.1 & 11.5 & 41.9 & 52.2 & 30.9 & 46.8 & \cellcolor{avgrow}37.8 & 99.6 \\
90\%  & 38.2 & 54.9 & 26.2 & 29.2 & 57.1 & 57.1 & 57.2 & 50.0 & 34.1 & 8.2  & 14.4 & 11.1 & 43.0 & 51.9 & 24.2 & 45.1 & \cellcolor{avgrow}37.6 & 99.1 \\
\rowcolor{altrow}
100\% & 38.6 & 54.4 & 25.5 & 28.2 & 57.2 & 55.8 & 57.0 & 53.0 & 34.4 & 7.9  & 14.5 & 11.9 & 42.8 & 53.0 & 26.7 & 46.7 & \cellcolor{avgrow}38.0 & 100.0 \\
\bottomrule
\end{tabular}}
\end{table}

\clearpage
\section{Additional Experiments}
\label{app:additional_experiments}

In this section, we provide additional experiments that further evaluate the scope and robustness of PORTAL. 
We study whether the coarse-alignment phenomenon extends to a substantially higher-capacity connector, whether PORTAL remains effective under larger-scale multimodal instruction tuning, how its performance varies across task categories, how it compares with alternative unpaired alignment methods, and whether training-free alignment affects object hallucination.

\subsection{Comparison with Alternative Unpaired Alignment Methods}
\label{app:alignment_baselines}

Random initialization in our main experiments serves as a control for completely skipping Stage-I alignment rather than as the only competing alignment method.
We additionally compare PORTAL against CORAL~\cite{sun2016return}, which aligns second-order feature statistics in closed form, and Gromov-Wasserstein (GW) alignment~\cite{alvarez2018gromov}, an iterative unpaired alignment method.

\begin{table}[!htbp]
\centering
\small
\caption{
Comparison with alternative alignment methods.
All methods except the conventional Stage-I baseline require no paired image-caption samples.
}
\label{tab:alignment_baselines}
\begin{tabular}{lcc}
\toprule
\textbf{Initialization} & \textbf{Avg16} & \textbf{\% of Baseline} \\
\midrule
Stage-I trained baseline & 48.03 & 100.0 \\
Random                   & 44.90 & 93.5 \\
CORAL                    & 46.36 & 96.5 \\
Gromov-Wasserstein      & 47.06 & 98.0 \\
PORTAL                   & \textbf{47.43} & \textbf{98.8} \\
\bottomrule
\end{tabular}
\end{table}

\begin{table}[!htbp]
\centering
\small
\caption{
Characteristics of the unpaired initialization methods considered in Table~\ref{tab:alignment_baselines}.
}
\label{tab:alignment_baseline_properties}
\begin{tabular}{lccc}
\toprule
\textbf{Method} & \textbf{Paired Samples} &
\textbf{Projector Training} & \textbf{Construction} \\
\midrule
Random & No & No & Random initialization \\
CORAL  & No & No & Closed form \\
GW     & No & No & Iterative optimization \\
PORTAL & No & No & Closed form \\
\bottomrule
\end{tabular}
\end{table}

Both CORAL and GW substantially improve over random initialization, recovering 96.5\% and 98.0\% of the paired Stage-I baseline, respectively (Table \ref{tab:alignment_baselines}).
This confirms more broadly that useful projector initialization can be obtained from unpaired feature distributions.
PORTAL performs best among the unpaired methods considered here, recovering 98.8\% of the trained baseline while retaining a closed-form construction and requiring neither paired samples nor projector optimization.

\subsection{Performance Retention Across Task Categories}
\label{app:category_breakdown}

We next examine whether the aggregate performance of PORTAL hides systematic differences across task categories.
Table~\ref{tab:category_retention} reports performance relative to the corresponding conventionally Stage-I-trained baseline for Knowledge, General, OCR, and Vision-Centric benchmarks.

\begin{table}[!htbp]
\centering
\small
\caption{
PORTAL performance as a percentage of the corresponding Stage-I-trained baseline, broken down by task category.
The last row averages the retention rate across the seven (vision encoder, LLM) configurations.
}
\label{tab:category_retention}
\resizebox{\linewidth}{!}{
\begin{tabular}{llccccc}
\toprule
\textbf{LLM} & \textbf{Vision Encoder} &
\textbf{Knowledge} & \textbf{General} &
\textbf{OCR} & \textbf{Vision} & \textbf{Overall} \\
\midrule
Vicuna 1.5 7B  & CLIP   & 100.3 & 97.3  & 96.6  & 100.5 & 98.8 \\
Vicuna 1.5 7B  & SigLIP & 100.6 & 98.1  & 97.6  & 101.0 & 99.3 \\
LLaMA 3.1 8B   & CLIP   & 103.6 & 100.7 & 103.9 & 103.8 & 102.7 \\
LLaMA 3.2 1B   & CLIP   & 102.5 & 109.8 & 112.1 & 103.1 & 106.2 \\
Qwen 2.5 7B    & CLIP   & 98.5  & 99.4  & 92.6  & 99.3  & 98.3 \\
Qwen 2.5 0.5B  & CLIP   & 100.9 & 113.2 & 121.1 & 97.9  & 106.1 \\
OLMo 2 1B      & CLIP   & 98.2  & 95.4  & 92.9  & 97.4  & 96.3 \\
\midrule
\textbf{Mean across configurations} & --- &
\textbf{100.7} & \textbf{102.0} &
\textbf{102.4} & \textbf{100.4} & \textbf{101.1} \\
\bottomrule
\end{tabular}}
\end{table}

On average across configurations, PORTAL retains 100.7\%, 102.0\%, 102.4\%, and 100.4\% of the trained-baseline performance on Knowledge, General, OCR, and Vision-Centric tasks, respectively.
Thus, the aggregate performance of PORTAL is not driven by a single task category.
At the same time, paired Stage-I training remains beneficial for particular encoder-LLM combinations and individual benchmarks. For example, the largest category-level deficits occur for OCR under Qwen-2.5-7B and OLMo-2-1B.
The results therefore suggest a more nuanced picture than fine-grained tasks universally requiring paired Stage-I supervision: the value of paired supervision depends on the specific backbone, encoder, and benchmark.

\subsection{Large-Scale Multimodal Instruction Tuning}
\label{app:large_sft}

Our main experiments use LLaVA-665k for Stage-II instruction tuning.
We further test whether PORTAL remains effective under a substantially larger multimodal fine-tuning regime using 3.2M samples subsampled from Cambrian-7M.
This setting provides a stronger test of whether PORTAL initialization remains stable when followed by heavier multimodal training.

\begin{table}[!htbp]
\centering
\small
\caption{
Performance under large-scale Stage-II multimodal instruction tuning with 3.2M samples.
The confidence interval reports the 95\% CI for the difference between PORTAL and the Stage-I-trained baseline.
}
\label{tab:large_sft}
\begin{tabular}{lcccc}
\toprule
\textbf{Category} & \textbf{Baseline} & \textbf{PORTAL} & $\boldsymbol{\Delta}$ & \textbf{95\% CI} \\
\midrule
Knowledge & 45.95 & 46.68 & +0.72 & $[-0.40,\,+1.88]$ \\
General   & 59.60 & 59.88 & +0.27 & $[-0.71,\,+1.27]$ \\
OCR       & 30.28 & 31.45 & +1.18 & $[+0.67,\,+1.68]$ \\
Vision    & 50.12 & 51.28 & +1.16 & $[-0.13,\,+2.46]$ \\
\midrule
\textbf{Overall} & \textbf{46.49} & \textbf{47.32} &
\textbf{+0.83} & $\mathbf{[+0.36,\,+1.35]}$ \\
\bottomrule
\end{tabular}
\end{table}

PORTAL remains effective under this substantially heavier Stage-II training regime (Table~\ref{tab:large_sft}).
Across the 16 benchmarks, the overall mean increases from 46.49 with conventional Stage-I initialization to 47.32 with PORTAL, corresponding to a statistically significant improvement of $+0.83$ points with a 95\% confidence interval of $[+0.36,+1.35]$.
Importantly, PORTAL does not lead to performance collapse or degradation under large-scale multimodal fine-tuning.
The result suggests that the benefit of the closed-form initialization is not restricted to the standard LLaVA-665k Stage-II regime.

\subsection{Object Hallucination}
\label{app:pope}

Finally, we evaluate whether replacing paired Stage-I alignment with coarse statistical alignment increases object hallucination.
We evaluate all seven (vision encoder, LLM) configurations on POPE~\cite{li2023evaluating}, a standard benchmark for object hallucination (Table~\ref{tab:pope}).
PORTAL matches or outperforms the conventionally Stage-I-trained baseline across all seven configurations, with improvements ranging from $+0.2$ to $+2.2$ F1 points.
Thus, replacing paired Stage-I projector training with PORTAL does not increase object hallucination in these experiments and can slightly improve hallucination robustness.

\begin{table}[!htbp]
\centering
\small
\caption{
POPE F1 scores across all seven (vision encoder, LLM) configurations.
Numbers in parentheses indicate the improvement of PORTAL over the conventionally Stage-I-trained baseline.
}
\label{tab:pope}
\begin{tabular}{llccc}
\toprule
\textbf{LLM} & \textbf{Vision Encoder} &
\textbf{Baseline} & \textbf{Random} & \textbf{PORTAL} \\
\midrule
Vicuna 1.5 7B & CLIP
& 85.5 & 85.3 & \textbf{85.7} $(+0.2)$ \\
Qwen 2.5 7B & CLIP
& 85.7 & 85.5 & \textbf{86.4} $(+0.7)$ \\
LLaMA 3.1 8B & CLIP
& 84.0 & 84.2 & \textbf{84.9} $(+0.9)$ \\
Vicuna 1.5 7B & SigLIP
& 84.6 & 79.1 & \textbf{85.1} $(+0.5)$ \\
OLMo 2 1B & CLIP
& 85.3 & 84.8 & \textbf{85.5} $(+0.2)$ \\
LLaMA 3.2 1B & CLIP
& 84.1 & 84.0 & \textbf{86.2} $(+2.1)$ \\
Qwen 2.5 0.5B & CLIP
& 82.9 & 82.7 & \textbf{85.1} $(+2.2)$ \\
\bottomrule
\end{tabular}
\end{table}

\subsection{Higher-Capacity Connector}
\label{app:cabstractor}

Our main experiments use either a two-layer MLP or a linear projector.
A possible alternative explanation for the observed coarse-alignment behavior is therefore that it arises from limited projector capacity.
To test this, we additionally evaluate PORTAL with the C-Abstractor connector from Honeybee~\cite{cha2024honeybee}, a substantially higher-capacity architecture that performs both token mixing and sequence compression.

In the LLaMA-3.2-1B + CLIP setting, C-Abstractor contains 120.2M parameters, approximately $19\times$ more than the 6.3M-parameter two-layer MLP, and compresses the visual sequence from 576 to 144 tokens.
We initialize the residual branches of C-Abstractor to zero such that the connector initially reduces to applying PORTAL to the pooled visual features.
All remaining C-Abstractor parameters are then optimized during the same Stage-II training used for the other initialization schemes.

As shown in Table~\ref{tab:cabstractor}, PORTAL reaches an average score of 42.93, compared with 42.98 for conventional Stage-I training, while direct Stage-II training from random initialization reaches only 41.46.
PORTAL also outperforms the Stage-I-trained baseline on 7 of the 16 individual benchmarks.
Thus, increasing connector capacity by approximately $19\times$, while additionally enabling token mixing and sequence compression, does not yield an aggregate advantage for paired Stage-I training over PORTAL.
Together with our linear and two-layer MLP experiments, this result provides evidence that the observed coarse-alignment behavior cannot be explained simply by insufficient projector capacity.
We do not, however, claim that every possible connector architecture must exhibit identical behavior.

\begin{table}[!htbp]
\centering
\small
\caption{
Performance with the higher-capacity C-Abstractor connector.
Despite substantially increasing connector capacity and introducing explicit token mixing and sequence compression, PORTAL remains essentially indistinguishable from conventional Stage-I training.
}
\label{tab:cabstractor}
\begin{tabular}{lccc}
\toprule
\textbf{Task} & \textbf{Baseline} & \textbf{Random} & \textbf{PORTAL} \\
\midrule
AI2D            & 49.3 & 46.7 & 48.0 \\
ScienceQA       & 68.2 & 65.2 & 66.2 \\
MathVista       & 26.9 & 25.9 & 26.2 \\
MMMU            & 32.3 & 34.6 & 33.9 \\
GQA             & 60.6 & 59.7 & 60.7 \\
MME-P           & 60.8 & 59.6 & 65.4 \\
SEED-L          & 58.4 & 54.7 & 56.2 \\
MMBench         & 58.4 & 52.4 & 54.9 \\
TextVQA-OCR     & 46.7 & 43.6 & 45.4 \\
OCRBench-v2     & 10.7 & 11.2 & 11.5 \\
DocVQA          & 13.2 & 11.8 & 12.7 \\
ChartQA         & 11.1 & 10.4 & 10.8 \\
CV-Bench 2D     & 50.1 & 47.8 & 50.8 \\
CV-Bench 3D     & 54.2 & 53.9 & 55.2 \\
BLINK           & 38.3 & 37.0 & 40.9 \\
RealWorldQA     & 48.6 & 48.9 & 48.1 \\
\midrule
\textbf{Average} & \textbf{42.98} & \textbf{41.46} & \textbf{42.93} \\
\bottomrule
\end{tabular}
\end{table}

\clearpage
\section{PORTAL Construction: Pseudocode}
\label{app:portal_pseudocode}

Listing~\ref{lst:portal} states the PORTAL construction as a self-contained PyTorch routine. The function consumes a sample of source features (\textit{e.g.}, CLIP patch tokens), a sample of target features (\textit{e.g.}, the LLM's input embedding table or the hidden states after block $\ell$), and a rank $k$. It returns the parameters $(\mathbf{W}_1, \mathbf{b}_1, \mathbf{W}_2, \mathbf{b}_2)$ of the standard two-layer LLaVA projector $f(\mathbf{x}) = \mathbf{W}_2\,\phi(\mathbf{W}_1\mathbf{x} + \mathbf{b}_1) + \mathbf{b}_2$ from Section~\ref{sec:method_construction}, with $\phi$ a GELU. The first layer is the closed-form Gaussian optimal-transport map of Eq.~\eqref{eq:wot}, and the second layer is fixed to the identity. No paired captions and no Stage-I optimization are required, only one PCA on each side.

\begin{lstlisting}[style=neuriPy,caption={PORTAL: training-free projector via Gaussian optimal transport. The same routine yields PORTAL-ctx$\ell$ when $Y$ is replaced by post-block-$\ell$ hidden states (Appendix~\ref{app:ctx_methodology}).},label={lst:portal}]
import torch

def portal(X, Y, k=512):
    """
    X: source features, shape (N_s, d_s)  -- e.g. CLIP patch tokens.
    Y: target features, shape (N_t, d_t)  -- LLM input embeddings,
                                             or post-block-l hidden
                                             states for PORTAL-ctx-l.
    k: PCA rank for the shared subspace.

    Returns (W1, b1, W2, b2) for the LLaVA projector
        f(x) = W2 @ gelu(W1 @ x + b1) + b2.
    """
    # 1. Source statistics: top-k PCA on centered source features.
    mu_s        = X.mean(dim=0)                              # (d_s,)
    _, S_s, Vt_s = torch.linalg.svd(X - mu_s, full_matrices=False)
    V_s         = Vt_s[:k].T                                 # (d_s, k)
    sigma_s     = S_s[:k] / (X.shape[0] - 1) ** 0.5          # (k,)

    # 2. Target statistics: top-k PCA on centered target features.
    mu_t        = Y.mean(dim=0)                              # (d_t,)
    _, S_t, Vt_t = torch.linalg.svd(Y - mu_t, full_matrices=False)
    V_t         = Vt_t[:k].T                                 # (d_t, k)
    sigma_t     = S_t[:k] / (Y.shape[0] - 1) ** 0.5          # (k,)

    # 3. Closed-form projector (Eq. 2): W2 fixed to identity.
    W1 = V_t @ torch.diag(sigma_t / sigma_s) @ V_s.T         # (d_t, d_s)
    b1 = mu_t - W1 @ mu_s                                    # (d_t,)
    W2 = torch.eye(Y.shape[1])
    b2 = torch.zeros(Y.shape[1])
    return W1, b1, W2, b2
\end{lstlisting}

\paragraph{Inputs.} The source matrix \texttt{X} contains all per-patch tokens from a small unlabeled image pool: with $P$ patches per image and $N_{\mathrm{img}}$ images, $N_s = N_{\mathrm{img}} \cdot P$. Throughout the paper we use a default of $N_{\mathrm{img}}=1024$ images from COCO \texttt{val2017} (Appendix~\ref{app:image_pool_ablation}). For default PORTAL, the target matrix \texttt{Y} is the LLM's input embedding table, with one row per vocabulary token. For PORTAL-ctx$\ell$, \texttt{Y} instead holds the hidden states after block $\ell$, which we collect by passing a list of English words and short phrases (the \texttt{concepts.txt} file of LatentLens) through the LLM (Appendix~\ref{app:ctx_methodology}).

\paragraph{Interpretation.} Read right to left, \texttt{V\_s.T} projects each centered source token onto the source's top-$k$ principal axes, the diagonal rescaling \texttt{diag(sigma\_t / sigma\_s)} rescales each axis from source to target spread, and \texttt{V\_t} rotates the result into the target's top-$k$ principal frame. Adding \texttt{b1 = mu\_t - W1 @ mu\_s} then recenters to the target mean. The composition matches first and second moments of source and target on the $k$-dimensional shared subspace. We use $k{=}512$ as the default (Appendix~\ref{app:rank_ablation}). The construction is well-defined for any $k \le \min(d_s, d_t, N_s, N_t)$.

\paragraph{Cost.} The procedure runs two thin SVDs and two matrix multiplies. For the default setting ($N_s \!\approx\! 1024 \cdot 576$, $d_s = 1024$, $d_t = 4096$, $k = 512$) the construction completes in under $30$ seconds on a single RTX~3090 (Appendix~\ref{app:compute}) and uses no captions, no paired data, and no gradient steps. This stands in contrast to standard Stage-I projector training, which consumes a paired image-caption corpus (typically LCS-558k) and about $2{,}200$ optimizer steps ($558$k pairs at batch size $256$).

\section{Contextualized Targets: Methodology Details}
\label{app:ctx_methodology}

This appendix provides the implementation details behind PORTAL-ctx$\ell$ (Section~\ref{sec:method_construction}).

\paragraph{Probe corpus.}
We use the \texttt{concepts.txt} corpus released with LatentLens~\citep{latentlens}, a list of English concepts originally assembled for probing latent directions in LLMs. The corpus offers broad semantic coverage and is not tied to any downstream task, which makes it a representative sample of the input distribution of the LLM at depth $\ell$.

\paragraph{Cached hidden states.}
For LLMs covered by LatentLens (LLaMA 3.1 8B, Qwen 2.5 7B, Gemma 2 9B) we use the pre-computed hidden states distributed on Hugging Face. For each LLM and each layer $\ell$, the cache stores the post-block-$\ell$ hidden state of every token in every \texttt{concepts.txt} entry, collected in a single tensor $\mathbf H^{(\ell)} \in \mathbb R^{N_t \times d_t}$. The available layers are $\ell \in \{1, 4, 8, 12, 16, 20\}$. For Vicuna~v1.5, which is not part of the LatentLens release, we run \texttt{concepts.txt} through the model and store the corresponding hidden states in the same format. The tensor $\mathbf H^{(\ell)}$ replaces the LLM's input embedding table as the target distribution in Eq.~\eqref{eq:wot}. The remainder of the PORTAL construction is unchanged.

\paragraph{Layer indexing.}
$\ell$ denotes the residual stream \emph{after} the $\ell$-th transformer block (1-indexed), following the LatentLens convention. The setting $\ell{=}0$ recovers standard PORTAL targeting the LLM's input embedding table.

\section{Cosine Dispersion: Theory and Empirical Checks}
\label{app:cosine_dispersion}

This section consolidates the analysis underlying Section~\ref{sec:method_uniformity}. It quantifies how non-Gaussian the relevant source and target distributions are (Appendix~\ref{app:gaussianity_audit}), proves the spectrum-to-$\omega$ bound (Appendix~\ref{app:concentration}), reports the empirical check of that bound on PORTAL projector outputs (Appendix~\ref{app:sigma_c_projector}), and extends the empirical check to the full MLP-GELU projector (Appendix~\ref{app:gelu_diagnostic}).

\subsection{Distributional Audit of Source and Target Modalities}
\label{app:gaussianity_audit}

PORTAL treats both modalities as Gaussian in Eq.~\eqref{eq:wot} only to obtain a closed-form transport map. The empirical source and target distributions are not exactly Gaussian. This subsection quantifies the deviation so that the reader can assess how reliable the second-order summary used by PORTAL is.

\paragraph{Procedure.}
For each distribution we draw $n{=}5{,}000$ tokens (CLIP and SigLIP patches from COCO \texttt{val2017} images, rows of the LLM's input embedding table), project the centered features onto their top $k{=}128$ principal components and whiten the resulting coordinates to unit per-axis variance. On these coordinates we compute four statistics. Mardia's multivariate skewness $b_1$~\cite{mardia1970}, reported as $b_1/k^3$, which is close to zero under Gaussianity. Mardia's kurtosis $b_2$~\cite{mardia1970}, reported as the excess $b_2/k^2 - (1 + 2/k)$ over its Gaussian value, which quantifies tail weight relative to a Gaussian. The fraction of the $k$ whitened marginals for which the Anderson-Darling test~\cite{anderson1952} rejects $\mathcal{N}(0,1)$ at the $5\%$ level. The effective dimension $d_{\mathrm{eff}} = (\mathrm{tr}\,\boldsymbol\Sigma)^2 / \mathrm{tr}(\boldsymbol\Sigma^2)$~\cite{roy2007effective}, which summarizes how variance is distributed across the leading directions. Results are reported in Table~\ref{tab:gaussianity_audit}.

\begin{table}[!htbp]
\centering
\caption{\textbf{Distributional audit on top $k{=}128$ principal components.} The columns most relevant to the argument in Section~\ref{sec:method_construction} are $d_{\mathrm{eff}}$ and Mardia $b_1$. The Anderson-Darling rejection rate (AD rej.) and the excess kurtosis identify where the deviation from Gaussianity is concentrated.}
\label{tab:gaussianity_audit}
\vspace{2pt}
\small
\setlength{\tabcolsep}{4pt}
\renewcommand{\arraystretch}{1.05}
\begin{tabular}{l l r r r r r r}
\toprule
\textbf{Type} & \textbf{Distribution} & $n$ & $d$ & $d_{\mathrm{eff}}$ & Mardia $b_1$ & Exc.\ kurt. & AD rej. \\
\midrule
\rowcolor{baserow} Source & CLIP ViT-L/14 patches & 5{,}000 & 1{,}024 & 4.5 & 0.0022 & 0.487 & 100\% \\
\rowcolor{baserow} Source & SigLIP SO400M patches & 5{,}000 & 1{,}152 & 50.9 & 0.0026 & 0.278 & 100\% \\
\midrule
Target & Vicuna 7B embedding table & 5{,}000 & 4{,}096 & 1{,}429.6 & 0.0005 & 0.071 & 84\% \\
Target & LLaMA 3.1 8B embedding table & 5{,}000 & 4{,}096 & 1{,}713.2 & 0.0005 & 0.131 & 98\% \\
Target & Qwen 2.5 7B embedding table & 5{,}000 & 3{,}584 & 830.1 & 0.0004 & 0.114 & 100\% \\
\bottomrule
\end{tabular}
\end{table}

\paragraph{Results.}
No measured distribution is exactly Gaussian, and the Anderson-Darling test rejects Gaussianity on almost all top marginals. This is expected for high-dimensional learned features and does not, on its own, imply that PORTAL should fail.

The question is where the deviation lives. Skewness is small for every distribution ($b_1/k^3 \le 0.003$), so the residual deviation from Gaussianity lies in tail weight. The tails are heaviest for the vision features (excess kurtosis $0.28$ to $0.49$) and light for the LLM input embedding tables ($0.07$ to $0.13$).

A second question is whether the distributions that PORTAL targets spread their variance over many directions. The input embedding tables do, with $d_{\mathrm{eff}}$ between $830$ and $1{,}713$, while the CLIP patches concentrate their variance in a few directions ($d_{\mathrm{eff}} = 4.5$).

\subsection{Proof of Proposition~\ref{prop:sigma_c}}
\label{app:concentration}

Proposition~\ref{prop:sigma_c} states that the variance of the cosine between two random tokens is close to $1/d_{\mathrm{eff}}(\boldsymbol\Sigma)$ when token lengths vary little. The proof first computes the second moment of the plain dot product $\mathbf z^\top \mathbf z'$, which depends only on $\boldsymbol\Sigma$, and then bounds how much dividing by the two token lengths can change it.

\paragraph{Notation.} The trace $\mathrm{tr}\,\boldsymbol\Sigma$ equals the mean squared length $\mathbb{E}[\|\mathbf z\|^2]$, because $\mathbf z$ has mean zero. We write
\begin{itemize}[leftmargin=1.5em,itemsep=1pt,topsep=2pt]
    \item $\mathbf u = \mathbf z/\|\mathbf z\|$ and $\mathbf u' = \mathbf z'/\|\mathbf z'\|$ for the unit vectors along $\mathbf z$ and $\mathbf z'$,
    \item $c = \mathbf u^\top \mathbf u' = \cos\angle(\mathbf z, \mathbf z')$ for the cosine,
    \item $a = \|\mathbf z\|^2/\mathrm{tr}\,\boldsymbol\Sigma$ and $a' = \|\mathbf z'\|^2/\mathrm{tr}\,\boldsymbol\Sigma$ for the squared lengths relative to their mean, so that $\mathbb{E}[a] = 1$ and $\mathrm{std}(a) = \kappa$,
    \item $m = \mathbb{E}|1 - a|$ for the mean deviation of $a$ from $1$. By the Cauchy-Schwarz inequality, $m \le \sqrt{\mathbb{E}[(1-a)^2]} = \kappa$.
\end{itemize}

\begin{proof}
\textit{Step 1: the dot product.} Averaging over $\mathbf z$ with $\mathbf z'$ held fixed, and using $\mathbb{E}[\mathbf z \mathbf z^\top] = \boldsymbol\Sigma$,
\begin{equation*}
\mathbb{E}\big[(\mathbf z^\top \mathbf z')^2\big] = \mathbb{E}\big[\mathbf z'^\top \mathbf z\, \mathbf z^\top \mathbf z'\big] = \mathbb{E}\big[\mathbf z'^\top \boldsymbol\Sigma\, \mathbf z'\big] = \mathrm{tr}\big(\boldsymbol\Sigma\, \mathbb{E}[\mathbf z' \mathbf z'^\top]\big) = \mathrm{tr}(\boldsymbol\Sigma^2).
\end{equation*}
For the normalized squared dot product $q = (\mathbf z^\top \mathbf z')^2/(\mathrm{tr}\,\boldsymbol\Sigma)^2$ this gives
\begin{equation*}
\mathbb{E}[q] = \frac{\mathrm{tr}(\boldsymbol\Sigma^2)}{(\mathrm{tr}\,\boldsymbol\Sigma)^2} = \frac{1}{d_{\mathrm{eff}}(\boldsymbol\Sigma)}.
\end{equation*}
This step is exact and holds for any distribution.

\textit{Step 2: from the dot product to the cosine.} The cosine is the dot product divided by both lengths, so $c^2 = q/(a a')$, or $q = c^2 a a'$. Hence
\begin{equation*}
\mathbb{E}[c^2] - \frac{1}{d_{\mathrm{eff}}(\boldsymbol\Sigma)} = \mathbb{E}[c^2 - q] = \mathbb{E}\big[c^2 (1 - a a')\big].
\end{equation*}
Since $c^2 \le 1$, the absolute value of the right side is at most $\mathbb{E}|1 - a a'|$. Writing $1 - aa' = (1-a) + (1-a') - (1-a)(1-a')$, the triangle inequality and the independence of $a$ and $a'$ give
\begin{equation*}
\mathbb{E}|1 - a a'| \le \mathbb{E}|1-a| + \mathbb{E}|1-a'| + \mathbb{E}|1-a|\,\mathbb{E}|1-a'| = 2m + m^2 .
\end{equation*}

\textit{Step 3: the mean cosine.} The variance subtracts the squared mean, $\mathrm{Var}(c) = \mathbb{E}[c^2] - \mathbb{E}[c]^2$, so it remains to bound $\mathbb{E}[c]$, the average cosine between two independent tokens. By independence, $\mathbb{E}[c] = \mathbb{E}[\mathbf u]^\top \mathbb{E}[\mathbf u'] = \|\mathbb{E}[\mathbf u]\|^2$. Because $\mathbf z = \sqrt{\mathrm{tr}\,\boldsymbol\Sigma}\,\sqrt{a}\,\mathbf u$ has mean zero, $\mathbb{E}[\sqrt a\, \mathbf u] = \mathbf 0$, and therefore
\begin{equation*}
\mathbb{E}[\mathbf u] = \mathbb{E}\big[(1 - \sqrt a)\, \mathbf u\big],
\qquad
\|\mathbb{E}[\mathbf u]\| \le \mathbb{E}|1 - \sqrt a| \le \mathbb{E}|1 - a| = m ,
\end{equation*}
where we used $\|\mathbf u\| = 1$ and $|1 - \sqrt a| = |1 - a|/(1 + \sqrt a) \le |1-a|$. Hence $\mathbb{E}[c]^2 = \|\mathbb{E}[\mathbf u]\|^4 \le m^4$.

\textit{Combining the steps.} By Steps 2 and 3 and $m \le \kappa$,
\begin{equation*}
\left| \mathrm{Var}(c) - \frac{1}{d_{\mathrm{eff}}(\boldsymbol\Sigma)} \right|
\le \left| \mathbb{E}[c^2] - \frac{1}{d_{\mathrm{eff}}(\boldsymbol\Sigma)} \right| + \mathbb{E}[c]^2
\le 2m + m^2 + m^4
\le 2\kappa + \kappa^2 + \kappa^4 . \qedhere
\end{equation*}
\end{proof}

The quantity $\omega^2$ in Eq.~\eqref{eq:sigma_c} is the empirical variance of the cosines over all pairs of $n$ sampled tokens. Both of its pairwise averages are averages of a bounded function over all pairs (U-statistics), so $\omega^2$ converges to $\mathrm{Var}(c)$ as $n$ grows.

\subsection{Cosine Dispersion of PORTAL Projector Outputs}
\label{app:sigma_c_projector}

This appendix measures $\omega$ on the PORTAL outputs for the three CLIP pairs with a 7B or 8B LLM in Table~\ref{tab:main_results}. We construct PORTAL as in Appendix~\ref{app:portal_pseudocode} from the first $1024$ COCO \texttt{val2017} images, apply it to the next $100$ images ($576$ CLIP patches per image), and report two quantities. The intra-image $\omega$ is averaged across images, with each image contributing the dispersion of its own $576$ output tokens. The population $\omega$ is computed over $576$ output tokens sampled at random across all images. The intra-image quantity reflects the dispersion seen by attention on a single forward pass. The population quantity is the distributional property that Proposition~\ref{prop:sigma_c} describes.

\begin{table}[!htbp]
\centering
\caption{\textbf{Cosine dispersion $\omega$ of projector outputs and downstream average.} For each (encoder, LLM) pair, \emph{Source} is $\omega$ of the raw source patches, \emph{Random} and \emph{PORTAL} are $\omega$ after applying the corresponding projector, and \emph{Target} is $\omega$ of the LLM's input embedding table restricted to the top-$512$ principal subspace that PORTAL matches. Random and PORTAL denote the first layer $\mathbf{W}_1\mathbf{x}+\mathbf{b}_1$ of each projector. \emph{PORTAL $d_{\mathrm{eff}}$} reports the effective dimension of the PORTAL output cloud. Downstream Avg columns from Table~\ref{tab:main_results}.}
\label{tab:sigma_c_projector}
\vspace{2pt}
\resizebox{\textwidth}{!}{%
\setlength{\tabcolsep}{4pt}
\renewcommand{\arraystretch}{1.05}
\begin{tabular}{l l c r r r r r r r r}
\toprule
\textbf{Encoder} & \textbf{LLM} & \textbf{Scale} & \textbf{Source} & \textbf{Random} & \textbf{PORTAL} & \textbf{Target} & \textbf{PORTAL $d_{\mathrm{eff}}$} & \textbf{Random Avg} & \textbf{PORTAL Avg} & \textbf{Trained Avg} \\
\midrule
\multirow{2}{*}{CLIP} & \multirow{2}{*}{Vicuna 7B}
   & pop   & 0.234 & 0.234 & \cellcolor{bestcell}\textbf{0.100} & 0.056 & 266.7 & 44.9 & \textbf{47.4} & 48.0 \\
 & & intra & 0.252 & 0.251 & 0.235 & 0.056 & 266.7 & 44.9 & \textbf{47.4} & 48.0 \\
\midrule
\multirow{2}{*}{CLIP} & \multirow{2}{*}{LLaMA 3.1 8B}
   & pop   & 0.234 & 0.234 & \cellcolor{bestcell}\textbf{0.101} & 0.057 & 257.4 & 50.4 & \textbf{52.2} & 50.8 \\
 & & intra & 0.252 & 0.251 & 0.234 & 0.057 & 257.4 & 50.4 & \textbf{52.2} & 50.8 \\
\midrule
\multirow{2}{*}{CLIP} & \multirow{2}{*}{Qwen 2.5 7B}
   & pop   & 0.234 & 0.234 & \cellcolor{bestcell}\textbf{0.094} & 0.054 & 289.3 & 48.6 & \textbf{51.7} & 52.6 \\
 & & intra & 0.252 & 0.251 & 0.245 & 0.054 & 289.3 & 48.6 & \textbf{51.7} & 52.6 \\
\bottomrule
\end{tabular}}
\end{table}

PORTAL reduces $\omega$ from the source value of $0.23$ to between $0.09$ and $0.10$, towards the target value of $0.05$ to $0.06$. Over the full embedding table, the target's $\omega$ is $0.02$ to $0.03$. A random projector roughly preserves angles and therefore leaves $\omega$ at the source value. The intra-image $\omega$ does not separate the three projectors, because patches from a single image share a common offset that no global linear map can remove.

The measurement is correlative. PORTAL moves $\omega$ most of the way from the source towards the target, and we do not claim that $\omega$ is the sole geometric property responsible for the downstream performance of PORTAL.

\subsection{Post-Projection Statistics: Linear vs.\ MLP-GELU Output}
\label{app:gelu_diagnostic}

The closed-form construction in Eq.~\eqref{eq:wot} gives an affine map $\mathbf{W}_1\mathbf{x}+\mathbf{b}_1$ that matches the target's first and second moments under a Gaussian approximation. The implemented projector then passes that affine output through GELU. This subsection reports distributional statistics of the actual projector output for each (encoder, LLM) pair across four constructions. \textit{Random+GELU} is a Random Kaiming initialization followed by GELU. \textit{PORTAL (preact)} is the PORTAL preactivation alone, which is the linear object the theory in Section~\ref{sec:method_uniformity} applies to. \textit{PORTAL+GELU} is the PORTAL preactivation followed by GELU, which is the implemented PORTAL projector. \textit{Trained Stage-I} is a trained Stage-I projector loaded from checkpoint. The target row reports the same statistics on the LLM's input embedding table, restricted to the top-$512$ principal subspace that PORTAL matches. \textit{Trained Stage-I} is the public LLaVA-1.5 projector for Vicuna and our trained projectors for LLaMA and Qwen. For the latter two we report only the statistics that do not depend on the target.

For each output cloud we report five statistics. The mean offset $\|\Delta\boldsymbol\mu\|$ is the $L^2$ distance between the output mean and the target mean, with a relative version normalized by $\|\boldsymbol\mu_t\|$. The spectrum error is the relative $L^2$ distance between the top-$128$ output and target singular values. The effective dimension is $d_{\mathrm{eff}}=(\mathrm{tr}\,\boldsymbol\Sigma)^2/\mathrm{tr}(\boldsymbol\Sigma^2)$. The norm coefficient of variation $\mathrm{cv} = \mathrm{std}(\|\mathbf{z}\|)/\mathbb{E}[\|\mathbf{z}\|]$ serves as a concentration check. The population pairwise-cosine standard deviation is $\omega$ as defined in Eq.~\eqref{eq:sigma_c}. Statistics are computed over $100$ COCO \texttt{val2017} images of $576$ CLIP patches each, with PORTAL constructed from a disjoint set of $1024$ images.

\begin{table}[!htbp]
\centering
\caption{\textbf{Post-projection statistics on CLIP patch tokens.} Distributional fit to the target distribution for four projector constructions, plus a target row for reference. Smaller $\|\Delta\boldsymbol\mu\|$ and spectrum error indicate a closer fit to the target. The $d_{\mathrm{eff}}$ and $\omega$ values should be read against the target row of the same LLM.}
\label{tab:gelu_diagnostic}
\vspace{2pt}
\small
\setlength{\tabcolsep}{4pt}
\renewcommand{\arraystretch}{1.05}
\begin{tabular}{@{}l l r r r r r r@{}}
\toprule
\textbf{LLM} & \textbf{Method} & $\|\Delta\boldsymbol\mu\|$ & \textbf{Rel.} & \textbf{Spec.\ err.} & $d_{\mathrm{eff}}$ & $\mathrm{cv}$ & $\omega$ (pop) \\
\midrule
Vicuna 7B & \textit{target} & 0.000 & 0.000 & 0.000 & 319.7 & 0.123 & 0.056 \\
 & Random+GELU & 7.144 & 47.33 & 24.25 & 3.6 & 0.682 & 0.212 \\
 & PORTAL (preact) & 0.021 & 0.142 & 0.066 & 266.7 & 0.281 & 0.100 \\
 & PORTAL+GELU & 0.077 & 0.508 & 0.470 & 266.9 & 0.281 & 0.100 \\
 & Trained Stage-I & 33.18 & 219.8 & 142.4 & 2.4 & 1.047 & 0.232 \\
\midrule
LLaMA 3.1 8B & \textit{target} & 0.000 & 0.000 & 0.000 & 308.2 & 0.135 & 0.057 \\
 & Random+GELU & 7.145 & 83.28 & 47.07 & 3.6 & 0.682 & 0.212 \\
 & PORTAL (preact) & 0.011 & 0.133 & 0.088 & 257.4 & 0.279 & 0.101 \\
 & PORTAL+GELU & 0.043 & 0.505 & 0.460 & 257.3 & 0.279 & 0.101 \\
 & Trained Stage-I & -- & -- & -- & 2.4 & 1.019 & 0.239 \\
\midrule
Qwen 2.5 7B & \textit{target} & 0.000 & 0.000 & 0.000 & 352.5 & 0.216 & 0.054 \\
 & Random+GELU & 6.791 & 81.61 & 25.55 & 3.6 & 0.686 & 0.212 \\
 & PORTAL (preact) & 0.020 & 0.242 & 0.097 & 289.3 & 0.294 & 0.094 \\
 & PORTAL+GELU & 0.043 & 0.511 & 0.456 & 289.1 & 0.294 & 0.094 \\
 & Trained Stage-I & -- & -- & -- & 2.0 & 1.147 & 0.221 \\
\bottomrule
\end{tabular}
\end{table}

The table supports four observations. First, Random+GELU is far from the target on every column, consistent with the SigLIP collapse reported in Table~\ref{tab:main_results}. Second, the PORTAL preactivation matches the target mean and top-$128$ spectrum, comes close to its $d_{\mathrm{eff}}$, and moves $\omega$ from $0.23$ at the source to about $0.10$, against $0.05$ to $0.06$ for the target. Third, applying GELU to the PORTAL preactivation roughly preserves $d_{\mathrm{eff}}$ and $\omega$, even though the spectrum error grows by an order of magnitude. This empirical observation closes the gap between the theory of the linear object and the implemented MLP. Fourth, a trained Stage-I projector deviates substantially from the target distribution on every column we report. Despite this, trained Stage-I and PORTAL reach comparable downstream Avg in Table~\ref{tab:main_results}. The two solutions are distributionally distinct, which is consistent with the broad-solution-space rather than with the view that Stage-I learns a unique target distribution.

\section{Image Pool for PORTAL Construction}
\label{app:image_pool_ablation}

PORTAL uses an unlabeled image pool only to estimate the CLIP patch-feature statistics required by Eq.~\eqref{eq:wot}, and requires neither captions nor paired text. We assess the sensitivity of the constructed projector to the identity and size of this pool. Table~\ref{tab:image_pool_ablation} reports an ablation on LLaMA~3.2~1B/CLIP along two axes.
\begin{itemize}
    \item \textbf{Source diversity.} COCO (\texttt{val2017}, single domain) vs.\ a reservoir-sampled subset of Cambrian-737k~\cite{tong2024cambrian} spanning nine sources (\texttt{coco}, \texttt{vg}, \texttt{ocr\_vqa}, \texttt{gqa}, \texttt{chartqa}, \texttt{textvqa}, \texttt{ai2d}, \texttt{docvqa}, \texttt{dvqa}). For reference, the trained baseline uses the LCS-558k~\cite{liu2024llava} caption pool.
    \item \textbf{Pool size.} $1024$ images (standard) vs.\ $4096$ images.
\end{itemize}

The smaller single-domain COCO pool ($1024$ images) matches or exceeds the larger Cambrian pool ($4096$ images) on $12$ of $15$ benchmarks, with $2$ exact ties and $3$ marginal Cambrian wins (AI2D, MMMU, DocVQA). The COCO pool yields an aggregate gain of $+0.6$ Avg ($44.0$ vs.\ $43.4$) and $+1.3$ \%BL ($106.1$ vs.\ $104.8$).

These numbers indicate that the second-order statistics of a generic natural-image distribution are sufficient to specify the whitening map. Increasing source diversity provides no additional benefit once the statistics have converged, in agreement with the rapid decay of the leading singular values of a CLIP feature matrix. On this configuration, PORTAL is insensitive to the choice between the two pools.

\begin{table}[!htbp]
\centering
\caption{\textbf{Image-pool ablation for the PORTAL construction, per-benchmark breakdown.} Setup: LLaMA~3.2~1B/CLIP, Stage-II on LLaVA-665k. Both PORTAL variants use rank $k{=}512$ and require no paired captions. \textbf{Bold}: best between the two PORTAL rows per benchmark. Avg and \%BL are computed over the $15$ benchmarks shown.}
\label{tab:image_pool_ablation}
\vspace{2pt}
\resizebox{\textwidth}{!}{%
\setlength{\tabcolsep}{3.5pt}
\renewcommand{\arraystretch}{1.08}
\begin{tabular}{@{}l c c G cccc G cccc G ccc G cccc G c c@{}}
\toprule
& & & \multicolumn{4}{c}{Knowledge} & \multicolumn{4}{c}{General} & \multicolumn{3}{c}{OCR} & \multicolumn{4}{c}{Vision} & & \\
\cmidrule(lr){4-7} \cmidrule(lr){8-11} \cmidrule(lr){12-14} \cmidrule(lr){15-18}
\textbf{Pool} & \textbf{Src} & \textbf{\#Img} & \rotatebox{60}{AI2D} & \rotatebox{60}{SciQA} & \rotatebox{60}{MathV} & \rotatebox{60}{MMMU} & \rotatebox{60}{GQA} & \rotatebox{60}{MME} & \rotatebox{60}{SEED$^\text{L}$} & \rotatebox{60}{MMB} & \rotatebox{60}{OCR-Bv2} & \rotatebox{60}{DocVQA} & \rotatebox{60}{ChartQA} & \rotatebox{60}{CV-2D} & \rotatebox{60}{CV-3D} & \rotatebox{60}{BLINK} & \rotatebox{60}{RealWQ} & \cellcolor{avgrow}\textbf{Avg} & \textbf{\%BL} \\
\midrule
\cellcolor{baserow}Baseline$^\dag$ & \cellcolor{baserow}LCS & \cellcolor{baserow}558k & \cellcolor{baserow}46.5 & \cellcolor{baserow}66.2 & \cellcolor{baserow}26.9 & \cellcolor{baserow}33.6 & \cellcolor{baserow}59.7 & \cellcolor{baserow}58.4 & \cellcolor{baserow}56.2 & \cellcolor{baserow}46.1 & \cellcolor{baserow}10.1 & \cellcolor{baserow}12.8 & \cellcolor{baserow}10.8 & \cellcolor{baserow}53.5 & \cellcolor{baserow}56.4 & \cellcolor{baserow}37.2 & \cellcolor{baserow}47.3 & \cellcolor{baserow}41.4 & \cellcolor{baserow}100.0 \\
Random & -- & 0 & 46.6 & 66.9 & 26.3 & 34.2 & 59.8 & 58.1 & 56.0 & 45.2 & 10.4 & 12.8 & 11.0 & 54.2 & 54.9 & 37.4 & 46.9 & \cellcolor{avgrow}41.4 & 99.8 \\
\midrule
PORTAL & Cambrian & 4096 & \textbf{48.2} & 66.7 & 26.9 & \textbf{34.3} & 62.0 & 64.3 & \textbf{59.6} & 53.6 & 10.8 & \textbf{16.8} & \textbf{12.0} & 53.5 & 55.8 & 37.7 & 49.4 & \cellcolor{avgrow}43.4 & 104.8 \\
PORTAL & COCO & 1024 & 47.6 & \textbf{67.1} & \textbf{29.2} & 33.7 & \textbf{62.1} & \textbf{66.0} & \textbf{59.6} & \textbf{54.2} & \textbf{11.4} & 16.3 & \textbf{12.0} & \textbf{54.6} & \textbf{57.3} & \textbf{38.9} & \textbf{49.7} & \cellcolor{avgrow}\textbf{44.0} & \textbf{106.1} \\
\bottomrule
\end{tabular}}
\end{table}

\section{Projector Architecture: Linear vs.\ MLP}
\label{app:linear_vs_mlp}

The standard LLaVA projector is a two-layer MLP with a GELU activation. We test whether the benefit of PORTAL also transfers to a simpler architecture. We replace the MLP with a single linear layer (no nonlinearity, no hidden expansion, input dimension $1024$, output dimension $d_{\text{LLM}}$) and keep all other components fixed: LLaMA~3.2~1B/CLIP, Stage-II tuning on LLaVA-665k, identical hyperparameters, and the same PORTAL construction with rank $k{=}512$ on the COCO-1024 image pool from Appendix~\ref{app:image_pool_ablation}. For each projector we evaluate three initialization schemes: a trained baseline (Stage-I on LCS-558k), Random init, and PORTAL. The MLP rows are reproduced from Table~\ref{tab:main_results}.

Table~\ref{tab:linear_vs_mlp} reports the per-benchmark breakdown. Under the MLP, Random init matches the trained baseline ($41.5$ vs.\ $41.6$ Avg, $99.9\%$) and PORTAL reaches $44.2$ Avg ($106.2\%$). Under the linear projector, Random init falls to $37.6$ Avg ($94.2\%$ of the linear baseline) and PORTAL reaches $44.7$ Avg ($111.8\%$), the highest macro average among the six configurations. The Random-to-PORTAL gap on the macro average is $2.7$ points under the MLP and $7.1$ points under the linear projector. PORTAL is the stronger zero-shot init on $15$ of $16$ benchmarks under the MLP and on $16$ of $16$ benchmarks under the linear projector. This supports the interpretation that the gain from PORTAL is a property of the projector initialization and transfers across projector architectures.

\begin{table}[!htbp]
\centering
\caption{\textbf{Linear versus MLP-2x-GELU projector, per-benchmark breakdown.} Setup: LLaMA~3.2~1B/CLIP, Stage-II on LLaVA-665k. Each architecture is anchored to its own trained baseline for \%BL. \textbf{Bold}: best between Random and PORTAL within an architecture. \underline{Underline}: second best.}
\label{tab:linear_vs_mlp}
\vspace{2pt}
\resizebox{\textwidth}{!}{%
\setlength{\tabcolsep}{3.5pt}
\renewcommand{\arraystretch}{1.08}
\begin{tabular}{@{}l l G cccc G cccc G cccc G cccc G c c@{}}
\toprule
& & \multicolumn{4}{c}{Knowledge} & \multicolumn{4}{c}{General} & \multicolumn{4}{c}{OCR} & \multicolumn{4}{c}{Vision} & & \\
\cmidrule(lr){3-6} \cmidrule(lr){7-10} \cmidrule(lr){11-14} \cmidrule(lr){15-18}
\textbf{Projector} & \textbf{Init} & \rotatebox{60}{AI2D} & \rotatebox{60}{SciQA} & \rotatebox{60}{MathV} & \rotatebox{60}{MMMU} & \rotatebox{60}{GQA} & \rotatebox{60}{MME} & \rotatebox{60}{SEED$^\text{L}$} & \rotatebox{60}{MMB} & \rotatebox{60}{TxtVQA} & \rotatebox{60}{OCR-Bv2} & \rotatebox{60}{DocVQA} & \rotatebox{60}{ChartQA} & \rotatebox{60}{CV-2D} & \rotatebox{60}{CV-3D} & \rotatebox{60}{BLINK} & \rotatebox{60}{RealWQ} & \cellcolor{avgrow}\textbf{Avg} & \textbf{\%BL} \\
\midrule
\multirow{3}{*}{MLP-2x-GELU}
 & \cellcolor{baserow}Baseline$^\dag$ & \cellcolor{baserow}46.5 & \cellcolor{baserow}66.2 & \cellcolor{baserow}26.9 & \cellcolor{baserow}33.6 & \cellcolor{baserow}59.7 & \cellcolor{baserow}58.4 & \cellcolor{baserow}56.2 & \cellcolor{baserow}46.1 & \cellcolor{baserow}43.7 & \cellcolor{baserow}10.1 & \cellcolor{baserow}12.8 & \cellcolor{baserow}10.8 & \cellcolor{baserow}53.5 & \cellcolor{baserow}56.4 & \cellcolor{baserow}37.2 & \cellcolor{baserow}47.3 & \cellcolor{baserow}41.6 & \cellcolor{baserow}100.0 \\
 & Random       & \underline{46.6} & \underline{66.9} & \underline{26.3} & \cellcolor{bestcell}\textbf{34.2} & \underline{59.8} & \underline{58.1} & \underline{56.0} & \underline{45.2} & \underline{43.8} & \underline{10.4} & \underline{12.8} & \underline{11.0} & \underline{54.2} & \underline{54.9} & \underline{37.4} & \underline{46.9} & \cellcolor{avgrow}\underline{41.5} & \underline{99.9} \\
 & PORTAL & \cellcolor{bestcell}\textbf{47.6} & \cellcolor{bestcell}\textbf{67.1} & \cellcolor{bestcell}\textbf{29.2} & \underline{33.7} & \cellcolor{bestcell}\textbf{62.1} & \cellcolor{bestcell}\textbf{66.0} & \cellcolor{bestcell}\textbf{59.6} & \cellcolor{bestcell}\textbf{54.2} & \cellcolor{bestcell}\textbf{47.1} & \cellcolor{bestcell}\textbf{11.4} & \cellcolor{bestcell}\textbf{16.3} & \cellcolor{bestcell}\textbf{12.0} & \cellcolor{bestcell}\textbf{54.6} & \cellcolor{bestcell}\textbf{57.3} & \cellcolor{bestcell}\textbf{38.9} & \cellcolor{bestcell}\textbf{49.7} & \cellcolor{avgrow}\cellcolor{bestcell}\textbf{44.2} & \cellcolor{bestcell}\textbf{106.2} \\
\midrule
\multirow{3}{*}{Linear}
 & \cellcolor{baserow}Baseline$^\dag$ & \cellcolor{baserow}46.5 & \cellcolor{baserow}65.5 & \cellcolor{baserow}27.1 & \cellcolor{baserow}33.3 & \cellcolor{baserow}54.7 & \cellcolor{baserow}53.1 & \cellcolor{baserow}47.3 & \cellcolor{baserow}49.8 & \cellcolor{baserow}42.6 & \cellcolor{baserow}10.1 & \cellcolor{baserow}11.0 & \cellcolor{baserow}10.0 & \cellcolor{baserow}50.1 & \cellcolor{baserow}54.4 & \cellcolor{baserow}36.8 & \cellcolor{baserow}46.7 & \cellcolor{baserow}39.9 & \cellcolor{baserow}100.0 \\
 & Random       & \underline{46.0} & \underline{63.2} & \underline{24.6} & \underline{30.9} & \underline{53.0} & \underline{52.8} & \underline{46.1} & \underline{43.8} & \underline{36.8} & \underline{7.9} & \underline{7.3} & \underline{8.8} & \underline{45.6} & \underline{52.8} & \underline{37.7} & \underline{44.4} & \cellcolor{avgrow}\underline{37.6} & \underline{94.2} \\
 & PORTAL & \cellcolor{bestcell}\textbf{48.2} & \cellcolor{bestcell}\textbf{68.5} & \cellcolor{bestcell}\textbf{28.2} & \cellcolor{bestcell}\textbf{34.6} & \cellcolor{bestcell}\textbf{62.2} & \cellcolor{bestcell}\textbf{66.8} & \cellcolor{bestcell}\textbf{60.4} & \cellcolor{bestcell}\textbf{54.4} & \cellcolor{bestcell}\textbf{48.9} & \cellcolor{bestcell}\textbf{11.6} & \cellcolor{bestcell}\textbf{17.5} & \cellcolor{bestcell}\textbf{12.1} & \cellcolor{bestcell}\textbf{54.1} & \cellcolor{bestcell}\textbf{57.8} & \cellcolor{bestcell}\textbf{38.1} & \cellcolor{bestcell}\textbf{51.1} & \cellcolor{avgrow}\cellcolor{bestcell}\textbf{44.7} & \textbf{111.8} \\
\bottomrule
\end{tabular}}
\end{table}

\section{PCA Rank Sensitivity}
\label{app:rank_ablation}

\begin{figure}[!htbp]
\centering
\includegraphics[width=0.6\linewidth]{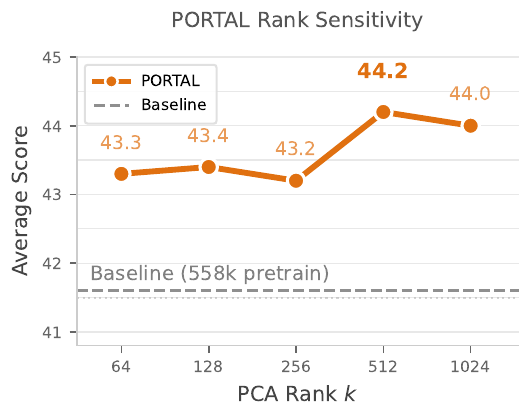}
\caption{\textbf{PCA rank sensitivity} on LLaMA~3.2~1B/CLIP. Average over the 16 benchmarks as a function of the rank $k$ used in the PORTAL construction, with identical Stage-II tuning. All ranks outperform the trained baseline ($41.6$). $k{=}512$ is optimal ($44.2$).}
\label{fig:rank_ablation}
\end{figure}

The PORTAL projector is constructed in a rank-$k$ PCA subspace (Section~\ref{sec:method_construction}). We vary $k$ over $\{64, 128, 256, 512, 1024\}$ on LLaMA~3.2~1B/CLIP, constructing each projector from the same $1024$ COCO images and running identical Stage-II tuning. Figure~\ref{fig:rank_ablation} shows the resulting Avg score. Performance is flat from $k{=}64$ to $k{=}256$ (approximately $43.3$ Avg), peaks at $k{=}512$ ($44.2$), and decreases marginally at $k{=}1024$ ($44.0$). The plateau at low ranks indicates that the dominant PCA directions already capture most of the cross-modal structure. The peak at $k{=}512$ suggests that mid-variance directions contribute additional geometric signal. The decrease at $k{=}1024$ is consistent with noisy eigenvectors diluting the transport map. We adopt $k{=}512$ as the default throughout the paper.

\newpage
\section*{NeurIPS Paper Checklist}

\begin{enumerate}

\item {\bf Claims}
    \item[] Question: Do the main claims made in the abstract and introduction accurately reflect the paper's contributions and scope?
    \item[] Answer: \answerYes{}
    \item[] Justification: Yes, all claims are placed against related work (Section~\ref{sec:RelatedWork}) and all claims regarding the method's performance are backed by empirical results (Section~\ref{sec:Experiments}).
    \item[] Guidelines:
    \begin{itemize}
        \item The answer \answerNA{} means that the abstract and introduction do not include the claims made in the paper.
        \item The abstract and/or introduction should clearly state the claims made, including the contributions made in the paper and important assumptions and limitations. A \answerNo{} or \answerNA{} answer to this question will not be perceived well by the reviewers. 
        \item The claims made should match theoretical and experimental results, and reflect how much the results can be expected to generalize to other settings. 
        \item It is fine to include aspirational goals as motivation as long as it is clear that these goals are not attained by the paper. 
    \end{itemize}

\item {\bf Limitations}
    \item[] Question: Does the paper discuss the limitations of the work performed by the authors?
    \item[] Answer: \answerYes{}
    \item[] Justification: Yes, the paper discusses limitations in the Conclusion and in Appendix~\ref{sec:Limitations}.
    \item[] Guidelines:
    \begin{itemize}
        \item The answer \answerNA{} means that the paper has no limitation while the answer \answerNo{} means that the paper has limitations, but those are not discussed in the paper. 
        \item The authors are encouraged to create a separate ``Limitations'' section in their paper.
        \item The paper should point out any strong assumptions and how robust the results are to violations of these assumptions (e.g., independence assumptions, noiseless settings, model well-specification, asymptotic approximations only holding locally). The authors should reflect on how these assumptions might be violated in practice and what the implications would be.
        \item The authors should reflect on the scope of the claims made, e.g., if the approach was only tested on a few datasets or with a few runs. In general, empirical results often depend on implicit assumptions, which should be articulated.
        \item The authors should reflect on the factors that influence the performance of the approach. For example, a facial recognition algorithm may perform poorly when image resolution is low or images are taken in low lighting. Or a speech-to-text system might not be used reliably to provide closed captions for online lectures because it fails to handle technical jargon.
        \item The authors should discuss the computational efficiency of the proposed algorithms and how they scale with dataset size.
        \item If applicable, the authors should discuss possible limitations of their approach to address problems of privacy and fairness.
        \item While the authors might fear that complete honesty about limitations might be used by reviewers as grounds for rejection, a worse outcome might be that reviewers discover limitations that aren't acknowledged in the paper. The authors should use their best judgment and recognize that individual actions in favor of transparency play an important role in developing norms that preserve the integrity of the community. Reviewers will be specifically instructed to not penalize honesty concerning limitations.
    \end{itemize}

\item {\bf Theory assumptions and proofs}
    \item[] Question: For each theoretical result, does the paper provide the full set of assumptions and a complete (and correct) proof?
    \item[] Answer: \answerYes{}
    \item[] Justification: Yes, Proposition~\ref{prop:sigma_c} states its assumptions, and Appendix~\ref{app:concentration} gives the complete proof.
    \item[] Guidelines:
    \begin{itemize}
        \item The answer \answerNA{} means that the paper does not include theoretical results. 
        \item All the theorems, formulas, and proofs in the paper should be numbered and cross-referenced.
        \item All assumptions should be clearly stated or referenced in the statement of any theorems.
        \item The proofs can either appear in the main paper or the supplemental material, but if they appear in the supplemental material, the authors are encouraged to provide a short proof sketch to provide intuition. 
        \item Inversely, any informal proof provided in the core of the paper should be complemented by formal proofs provided in appendix or supplemental material.
        \item Theorems and Lemmas that the proof relies upon should be properly referenced. 
    \end{itemize}

    \item {\bf Experimental result reproducibility}
    \item[] Question: Does the paper fully disclose all the information needed to reproduce the main experimental results of the paper to the extent that it affects the main claims and/or conclusions of the paper (regardless of whether the code and data are provided or not)?
    \item[] Answer: \answerYes{}
    \item[] Justification: Yes, experimental details are described in the experimental setup (Section~\ref{sec:Experiments}), and all hyperparameters and the evaluation protocol are given in Appendices~\ref{app:compute} and~\ref{app:eval_setup}.
    \item[] Guidelines:
    \begin{itemize}
        \item The answer \answerNA{} means that the paper does not include experiments.
        \item If the paper includes experiments, a \answerNo{} answer to this question will not be perceived well by the reviewers: Making the paper reproducible is important, regardless of whether the code and data are provided or not.
        \item If the contribution is a dataset and\slash or model, the authors should describe the steps taken to make their results reproducible or verifiable. 
        \item Depending on the contribution, reproducibility can be accomplished in various ways. For example, if the contribution is a novel architecture, describing the architecture fully might suffice, or if the contribution is a specific model and empirical evaluation, it may be necessary to either make it possible for others to replicate the model with the same dataset, or provide access to the model. In general. releasing code and data is often one good way to accomplish this, but reproducibility can also be provided via detailed instructions for how to replicate the results, access to a hosted model (e.g., in the case of a large language model), releasing of a model checkpoint, or other means that are appropriate to the research performed.
        \item While NeurIPS does not require releasing code, the conference does require all submissions to provide some reasonable avenue for reproducibility, which may depend on the nature of the contribution. For example
        \begin{enumerate}
            \item If the contribution is primarily a new algorithm, the paper should make it clear how to reproduce that algorithm.
            \item If the contribution is primarily a new model architecture, the paper should describe the architecture clearly and fully.
            \item If the contribution is a new model (e.g., a large language model), then there should either be a way to access this model for reproducing the results or a way to reproduce the model (e.g., with an open-source dataset or instructions for how to construct the dataset).
            \item We recognize that reproducibility may be tricky in some cases, in which case authors are welcome to describe the particular way they provide for reproducibility. In the case of closed-source models, it may be that access to the model is limited in some way (e.g., to registered users), but it should be possible for other researchers to have some path to reproducing or verifying the results.
        \end{enumerate}
    \end{itemize}

\item {\bf Open access to data and code}
    \item[] Question: Does the paper provide open access to the data and code, with sufficient instructions to faithfully reproduce the main experimental results, as described in supplemental material?
    \item[] Answer: \answerYes{}
    \item[] Justification: Yes, the code is available at \href{https://github.com/mlbio-epfl/PORTAL}{https://github.com/mlbio-epfl/PORTAL}.
    \item[] Guidelines:
    \begin{itemize}
        \item The answer \answerNA{} means that paper does not include experiments requiring code.
        \item Please see the NeurIPS code and data submission guidelines (\url{https://neurips.cc/public/guides/CodeSubmissionPolicy}) for more details.
        \item While we encourage the release of code and data, we understand that this might not be possible, so \answerNo{} is an acceptable answer. Papers cannot be rejected simply for not including code, unless this is central to the contribution (e.g., for a new open-source benchmark).
        \item The instructions should contain the exact command and environment needed to run to reproduce the results. See the NeurIPS code and data submission guidelines (\url{https://neurips.cc/public/guides/CodeSubmissionPolicy}) for more details.
        \item The authors should provide instructions on data access and preparation, including how to access the raw data, preprocessed data, intermediate data, and generated data, etc.
        \item The authors should provide scripts to reproduce all experimental results for the new proposed method and baselines. If only a subset of experiments are reproducible, they should state which ones are omitted from the script and why.
        \item At submission time, to preserve anonymity, the authors should release anonymized versions (if applicable).
        \item Providing as much information as possible in supplemental material (appended to the paper) is recommended, but including URLs to data and code is permitted.
    \end{itemize}

\item {\bf Experimental setting/details}
    \item[] Question: Does the paper specify all the training and test details (e.g., data splits, hyperparameters, how they were chosen, type of optimizer) necessary to understand the results?
    \item[] Answer: \answerYes{}
    \item[] Justification: Yes, experimental details are described in the experimental setup (Section~\ref{sec:Experiments}), and all hyperparameters and the evaluation protocol are given in Appendices~\ref{app:compute} and~\ref{app:eval_setup}.
    \item[] Guidelines:
    \begin{itemize}
        \item The answer \answerNA{} means that the paper does not include experiments.
        \item The experimental setting should be presented in the core of the paper to a level of detail that is necessary to appreciate the results and make sense of them.
        \item The full details can be provided either with the code, in appendix, or as supplemental material.
    \end{itemize}

\item {\bf Experiment statistical significance}
    \item[] Question: Does the paper report error bars suitably and correctly defined or other appropriate information about the statistical significance of the experiments?
    \item[] Answer: \answerYes{}
    \item[] Justification: Yes, all aggregate scores in the main results and ablation tables are reported with $95\%$ confidence intervals computed from $B{=}1{,}000$ paired bootstrap resamples~\cite{efron1979bootstrap}; resampling is paired across methods within each (encoder, LLM) group so the intervals reflect method differences rather than sample noise. The $\pm$ subscripts in the tables denote the half-width of these CIs. The bootstrap protocol is described in Appendix~\ref{app:eval_setup}.
    \item[] Guidelines:
    \begin{itemize}
        \item The answer \answerNA{} means that the paper does not include experiments.
        \item The authors should answer \answerYes{} if the results are accompanied by error bars, confidence intervals, or statistical significance tests, at least for the experiments that support the main claims of the paper.
        \item The factors of variability that the error bars are capturing should be clearly stated (for example, train/test split, initialization, random drawing of some parameter, or overall run with given experimental conditions).
        \item The method for calculating the error bars should be explained (closed form formula, call to a library function, bootstrap, etc.)
        \item The assumptions made should be given (e.g., Normally distributed errors).
        \item It should be clear whether the error bar is the standard deviation or the standard error of the mean.
        \item It is OK to report 1-sigma error bars, but one should state it. The authors should preferably report a 2-sigma error bar than state that they have a 96\% CI, if the hypothesis of Normality of errors is not verified.
        \item For asymmetric distributions, the authors should be careful not to show in tables or figures symmetric error bars that would yield results that are out of range (e.g., negative error rates).
        \item If error bars are reported in tables or plots, the authors should explain in the text how they were calculated and reference the corresponding figures or tables in the text.
    \end{itemize}

\item {\bf Experiments compute resources}
    \item[] Question: For each experiment, does the paper provide sufficient information on the computer resources (type of compute workers, memory, time of execution) needed to reproduce the experiments?
    \item[] Answer: \answerYes{}
    \item[] Justification: Yes, we specify the computational resources in Appendix~\ref{app:compute}.
    \item[] Guidelines:
    \begin{itemize}
        \item The answer \answerNA{} means that the paper does not include experiments.
        \item The paper should indicate the type of compute workers CPU or GPU, internal cluster, or cloud provider, including relevant memory and storage.
        \item The paper should provide the amount of compute required for each of the individual experimental runs as well as estimate the total compute. 
        \item The paper should disclose whether the full research project required more compute than the experiments reported in the paper (e.g., preliminary or failed experiments that didn't make it into the paper). 
    \end{itemize}
    
\item {\bf Code of ethics}
    \item[] Question: Does the research conducted in the paper conform, in every respect, with the NeurIPS Code of Ethics \url{https://neurips.cc/public/EthicsGuidelines}?
    \item[] Answer: \answerYes{}
    \item[] Justification: Yes, we have carefully reviewed and adhered to the NeurIPS Code of Ethics.
    \item[] Guidelines:
    \begin{itemize}
        \item The answer \answerNA{} means that the authors have not reviewed the NeurIPS Code of Ethics.
        \item If the authors answer \answerNo, they should explain the special circumstances that require a deviation from the Code of Ethics.
        \item The authors should make sure to preserve anonymity (e.g., if there is a special consideration due to laws or regulations in their jurisdiction).
    \end{itemize}

\item {\bf Broader impacts}
    \item[] Question: Does the paper discuss both potential positive societal impacts and negative societal impacts of the work performed?
    \item[] Answer: \answerYes{}
    \item[] Justification:  Yes, the positive societal impact of the paper is discussed in the introduction and conclusion. Negative societal impacts are not expected as the outlined methodology enables multimodal LLMs projector alignment without paired data while being computationally efficient.
    \item[] Guidelines:
    \begin{itemize}
        \item The answer \answerNA{} means that there is no societal impact of the work performed.
        \item If the authors answer \answerNA{} or \answerNo, they should explain why their work has no societal impact or why the paper does not address societal impact.
        \item Examples of negative societal impacts include potential malicious or unintended uses (e.g., disinformation, generating fake profiles, surveillance), fairness considerations (e.g., deployment of technologies that could make decisions that unfairly impact specific groups), privacy considerations, and security considerations.
        \item The conference expects that many papers will be foundational research and not tied to particular applications, let alone deployments. However, if there is a direct path to any negative applications, the authors should point it out. For example, it is legitimate to point out that an improvement in the quality of generative models could be used to generate Deepfakes for disinformation. On the other hand, it is not needed to point out that a generic algorithm for optimizing neural networks could enable people to train models that generate Deepfakes faster.
        \item The authors should consider possible harms that could arise when the technology is being used as intended and functioning correctly, harms that could arise when the technology is being used as intended but gives incorrect results, and harms following from (intentional or unintentional) misuse of the technology.
        \item If there are negative societal impacts, the authors could also discuss possible mitigation strategies (e.g., gated release of models, providing defenses in addition to attacks, mechanisms for monitoring misuse, mechanisms to monitor how a system learns from feedback over time, improving the efficiency and accessibility of ML).
    \end{itemize}
    
\item {\bf Safeguards}
    \item[] Question: Does the paper describe safeguards that have been put in place for responsible release of data or models that have a high risk for misuse (e.g., pre-trained language models, image generators, or scraped datasets)?
    \item[] Answer: \answerNA{}
    \item[] Justification:  This paper poses no such risks as beyond code, we are not releasing any new artifacts
such as data or models.
    \item[] Guidelines:
    \begin{itemize}
        \item The answer \answerNA{} means that the paper poses no such risks.
        \item Released models that have a high risk for misuse or dual-use should be released with necessary safeguards to allow for controlled use of the model, for example by requiring that users adhere to usage guidelines or restrictions to access the model or implementing safety filters. 
        \item Datasets that have been scraped from the Internet could pose safety risks. The authors should describe how they avoided releasing unsafe images.
        \item We recognize that providing effective safeguards is challenging, and many papers do not require this, but we encourage authors to take this into account and make a best faith effort.
    \end{itemize}

\item {\bf Licenses for existing assets}
    \item[] Question: Are the creators or original owners of assets (e.g., code, data, models), used in the paper, properly credited and are the license and terms of use explicitly mentioned and properly respected?
    \item[] Answer: \answerYes{}
    \item[] Justification: Yes, we cite the creators of every model and dataset we use (Section~\ref{sec:Experiments}, Appendices~\ref{app:compute} and~\ref{app:eval_setup}), and Table~\ref{tab:datasets} lists the license of each dataset.
    \item[] Guidelines:
    \begin{itemize}
        \item The answer \answerNA{} means that the paper does not use existing assets.
        \item The authors should cite the original paper that produced the code package or dataset.
        \item The authors should state which version of the asset is used and, if possible, include a URL.
        \item The name of the license (e.g., CC-BY 4.0) should be included for each asset.
        \item For scraped data from a particular source (e.g., website), the copyright and terms of service of that source should be provided.
        \item If assets are released, the license, copyright information, and terms of use in the package should be provided. For popular datasets, \url{paperswithcode.com/datasets} has curated licenses for some datasets. Their licensing guide can help determine the license of a dataset.
        \item For existing datasets that are re-packaged, both the original license and the license of the derived asset (if it has changed) should be provided.
        \item If this information is not available online, the authors are encouraged to reach out to the asset's creators.
    \end{itemize}

\item {\bf New assets}
    \item[] Question: Are new assets introduced in the paper well documented and is the documentation provided alongside the assets?
    \item[] Answer: \answerNA{}
    \item[] Justification:  This paper does not release any new assets beyond code, which is adequately documented in order to reproduce all results from the paper.
    \item[] Guidelines:
    \begin{itemize}
        \item The answer \answerNA{} means that the paper does not release new assets.
        \item Researchers should communicate the details of the dataset\slash code\slash model as part of their submissions via structured templates. This includes details about training, license, limitations, etc. 
        \item The paper should discuss whether and how consent was obtained from people whose asset is used.
        \item At submission time, remember to anonymize your assets (if applicable). You can either create an anonymized URL or include an anonymized zip file.
    \end{itemize}

\item {\bf Crowdsourcing and research with human subjects}
    \item[] Question: For crowdsourcing experiments and research with human subjects, does the paper include the full text of instructions given to participants and screenshots, if applicable, as well as details about compensation (if any)? 
    \item[] Answer: \answerNA{}
    \item[] Justification:  The paper does not involve crowdsourcing nor research with human subjects.
    \item[] Guidelines:
    \begin{itemize}
        \item The answer \answerNA{} means that the paper does not involve crowdsourcing nor research with human subjects.
        \item Including this information in the supplemental material is fine, but if the main contribution of the paper involves human subjects, then as much detail as possible should be included in the main paper. 
        \item According to the NeurIPS Code of Ethics, workers involved in data collection, curation, or other labor should be paid at least the minimum wage in the country of the data collector. 
    \end{itemize}

\item {\bf Institutional review board (IRB) approvals or equivalent for research with human subjects}
    \item[] Question: Does the paper describe potential risks incurred by study participants, whether such risks were disclosed to the subjects, and whether Institutional Review Board (IRB) approvals (or an equivalent approval/review based on the requirements of your country or institution) were obtained?
    \item[] Answer: \answerNA{}
    \item[] Justification: The paper does not involve crowdsourcing nor research with human subjects.
    \item[] Guidelines:
    \begin{itemize}
        \item The answer \answerNA{} means that the paper does not involve crowdsourcing nor research with human subjects.
        \item Depending on the country in which research is conducted, IRB approval (or equivalent) may be required for any human subjects research. If you obtained IRB approval, you should clearly state this in the paper. 
        \item We recognize that the procedures for this may vary significantly between institutions and locations, and we expect authors to adhere to the NeurIPS Code of Ethics and the guidelines for their institution. 
        \item For initial submissions, do not include any information that would break anonymity (if applicable), such as the institution conducting the review.
    \end{itemize}

\item {\bf Declaration of LLM usage}
    \item[] Question: Does the paper describe the usage of LLMs if it is an important, original, or non-standard component of the core methods in this research? Note that if the LLM is used only for writing, editing, or formatting purposes and does \emph{not} impact the core methodology, scientific rigor, or originality of the research, declaration is not required.
    \item[] Answer: \answerNA{}
    \item[] Justification:  The paper does not involve the development of any LLMs, nor their use beyond minor editing.
    \item[] Guidelines:
    \begin{itemize}
        \item The answer \answerNA{} means that the core method development in this research does not involve LLMs as any important, original, or non-standard components.
        \item Please refer to our LLM policy in the NeurIPS handbook for what should or should not be described.
    \end{itemize}

\end{enumerate}

\end{document}